\documentclass[letterpaper, 10 pt, conference]{ieeeconf}  % Comment this line out if you need a4paper

\IEEEoverridecommandlockouts                              % This command is only needed if 
                                                          % you want to use the \thanks command

\overrideIEEEmargins                                      % Needed to meet printer requirements.

\usepackage{graphics} % for pdf, bitmapped graphics files
\usepackage{wrapfig}

 % not loading proof (avoid error for overloading \proof)
\usepackage{amsthm} % for definition, theorm and etc
\usepackage{amsmath,amssymb} % amsmath: for using align equation and etc

\usepackage{graphicx}
\usepackage{tabularx}
\usepackage{multirow, makecell}
\usepackage{diagbox}
\usepackage{slashbox}
\usepackage{rotating}
\usepackage{subcaption}
\usepackage{cite}
\usepackage{romannum}
\usepackage{url}
\usepackage{bm}
\usepackage[table]{xcolor}
\usepackage{hyperref}
\usepackage{siunitx} % degree symbol

\usepackage{booktabs}
\usepackage{mathtools} % for := \coloneqq 

\usepackage[ruled,vlined]{algorithm2e}
%\usepackage[noend]{algpseudocode}

% Just in case you need to change the first word in any algorithm, other than "Algorithm" itself.
% \makeatletter
% \renewcommand{\thealgocf}{Alg.}% and the way it is printed
% \renewcommand{\fnum@algocf}{\AlCapSty{\AlCapFnt\thealgocf\nobreakspace\algorithmcfname}}%
% \makeatother
% \SetAlgorithmName{}{}{}

% \usepackage{pifont} % for x mark 
% \newcommand{\xmark}{\ding{55}}%

\newcommand{\revsecond}[1]{\textcolor{black}{#1}}

\newcommand{\vx}{{\boldsymbol x}}
\newcommand{\vu}{{\boldsymbol u}}
\newcommand{\vr}{{\boldsymbol r}}
\newcommand{\vp}{{\boldsymbol p}}
\newcommand{\lieder}{L}
\newcommand{\StateSpace}{\mathcal{X}}
\newcommand{\ControlSpace}{\mathcal{U}}
\newcommand{\RealSpace}{\mathbb{R}}
\newcommand{\Rn}{\mathbb{R}^{n}}
\newcommand{\Rm}{\mathbb{R}^{m}}
\newcommand{\Rnm}{\mathbb{R}^{n \times m}}
\newcommand{\calC}{\mathcal{C}} % invariant set
\newcommand{\calK}{\mathcal{K}} 

\newcommand{\calE}{\mathcal{E}} % edge
\newcommand{\calV}{\mathcal{V}} % vertice
\newcommand{\calX}{\mathcal{X}} % state
\newcommand{\calU}{\mathcal{U}} % control
\newcommand{\calP}{\mathcal{P}} % path
 % tube 
\newcommand{\calW}{\mathcal{W}} % world
\newcommand{\calB}{\mathcal{B}}
\newcommand{\calO}{\mathcal{O}} % obstacle
\newcommand{\calH}{\mathcal{H}} % hidden obstacle
\newcommand{\calS}{\mathcal{S}} % safe set
\newcommand{\calF}{\mathcal{F}} % one step collision-free set

% make theorem breakable
% \newtheoremstyle{break}
%   {\topsep}{\topsep}%
%   {\itshape}{}%
%   {\bfseries}{}%
%   {\newline}{}%
% \theoremstyle{break}
% \newtheorem{theorem}{Theorem}
\newtheorem{definition}{Definition}
\newtheorem{theorem}{Theorem}

\theoremstyle{definition}
\newtheorem{problem}{Problem}

\makeatother
%%%%%%%%%%%%%%%%%%%%%%%%%%%%%%%%%%%%%%%%%%%%%%%%%

\DeclareCaptionFont{mysize}{\fontsize{8}{9.6}\selectfont}
\captionsetup{font=mysize}

\title{\LARGE \bf
Visibility-Aware RRT* for Safety-Critical Navigation of Perception-Limited Robots in Unknown Environments}

\author{Taekyung Kim and Dimitra Panagou% <-this % stops a space
\thanks{Taekyung Kim is with the Department of Robotics, University of Michigan, Ann Arbor, MI, 48109, USA {\tt\footnotesize taekyung@umich.edu}}
\thanks{Dimitra Panagou is with the Department of Robotics and Department of Aerospace Engineering, University of Michigan, Ann Arbor, MI, 48109, USA {\tt\footnotesize dpanagou@umich.edu}}
\thanks{This work has been partially supported by Amazon and partially by NSF Award No. 1942907.}%
}

\begin{document}
\maketitle
\thispagestyle{empty}
\pagestyle{empty}

%%%%%%%%%%%%%%%%%%%%%%%%%%%%%%%%%%%%%%%%%%%%%%%%%%%%%%%%%%%%%%%%%%%%%%%%%%%%%%%%
\begin{abstract}
Safe autonomous navigation in unknown environments remains a critical challenge for robots with limited sensing capabilities. While safety-critical control techniques, such as Control Barrier Functions (CBFs), have been proposed to ensure safety, their effectiveness relies on the assumption that the robot has complete knowledge of its surroundings. In reality, robots often operate with restricted field-of-view and finite sensing range, which can lead to collisions with unknown obstacles if the planner is agnostic to these limitations. To address this issue, we introduce the Visibility-Aware RRT* algorithm that combines sampling-based planning with CBFs to generate safe and efficient global reference paths in partially unknown environments. The algorithm incorporates a collision avoidance CBF and a novel visibility CBF, which guarantees that the robot remains within locally collision-free regions, enabling timely detection and avoidance of unknown obstacles. We conduct extensive experiments interfacing the path planners with two different safety-critical controllers, wherein our method outperforms all other compared baselines across both safety and efficiency aspects. \href{https://www.taekyung.me/visibility-rrt}{\textcolor{red}{[Project Page]}}\footnote{Project page: \href{https://www.taekyung.me/visibility-rrt}{https://www.taekyung.me/visibility-rrt}} \href{https://github.com/tkkim-robot/visibility-rrt}{\textcolor{red}{[Code]}} \href{https://youtu.be/l4vlMTf8s74}{\textcolor{red}{[Video]}}

\end{abstract}

%%%%%%%%%%%%%%%%%%%%%%%%%%%%%%%%%%%%%%%%%%%%%%%%%%%%%%%%%%%%%%%%%%%%%%%%%%%%%%%%
\section{INTRODUCTION}

Autonomous navigation in unknown environments is a fundamental challenge in robotics, with applications ranging from exploration and search-and-rescue to autonomous driving. A general navigation stack comprises four key components: localization, mapping, planning, and control. Localization and mapping enable robots to build a map of their surroundings using onboard sensors, such as Visual Simultaneous Localization and Mapping (VSLAM) systems. However, these sensors often have limitations, including restricted field-of-view (FOV) and finite sensing range. For instance, popular off-the-shelf RGB-D cameras' horizontal FOV are often less than 70$^\circ$ \cite{da_silva_neto_comparison_2020}.

\begin{figure}[t]
    \centering
    \includegraphics[width=0.91\linewidth]{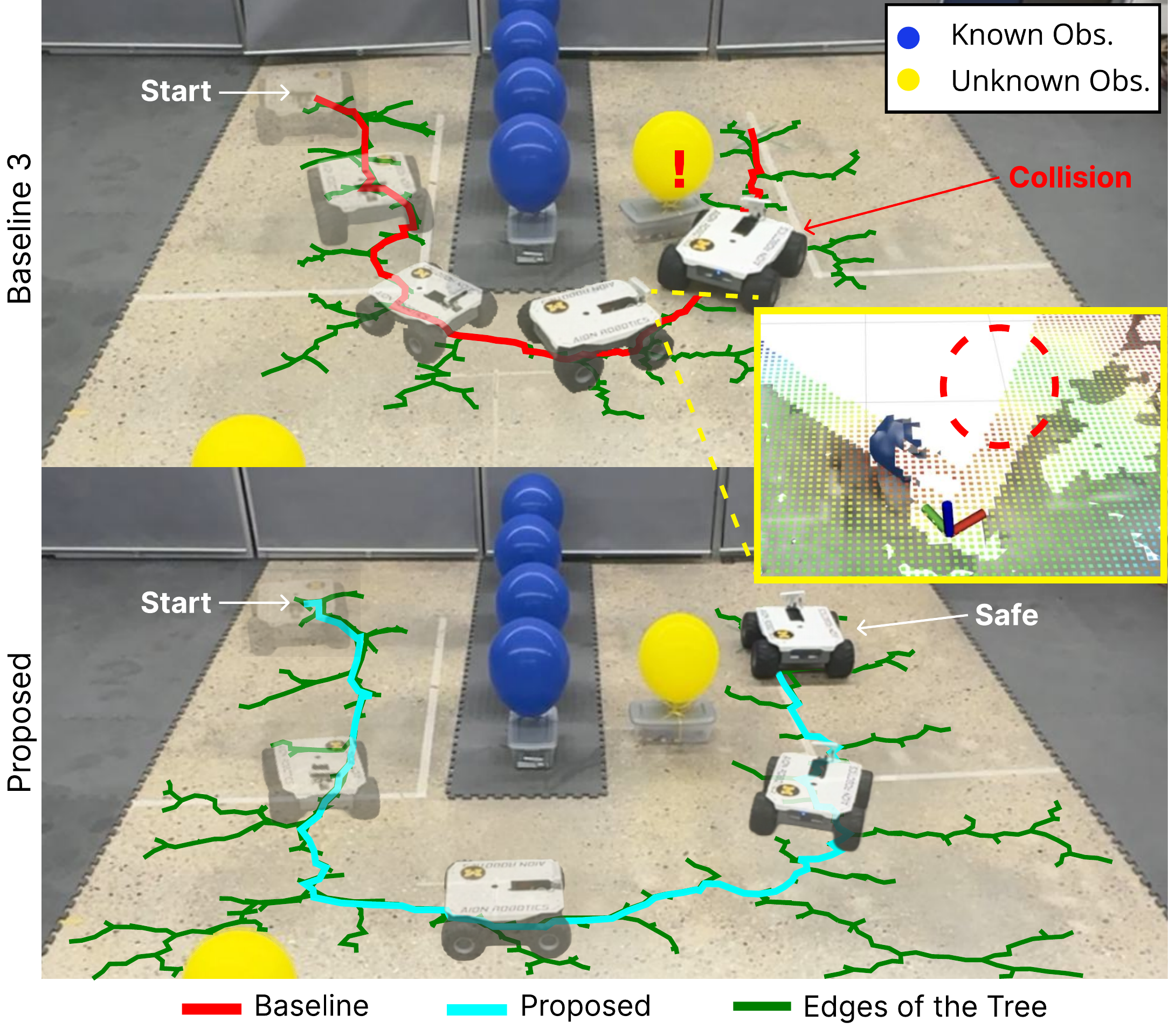}
    \caption{\revsecond{In this paper we develop an RRT*-based planner that accounts for sensing limitations, and in conjunction with a tracking controller, guarantees avoidance of \textit{a priori} unknown obstacles. This figure shows trajectories generated by baseline (\textbf{Top}) and visibility-aware (\textbf{Bottom, Ours}) planning methods, tracked by the rover using a CBF-QP controller (baselines and results presented in Section \ref{sec:experiments}.). \textbf{Inset: } Snapshot of the Signed Distance Field map just before the collision while following the baseline trajectory, showing that the unknown obstacle (which is located in the dashed circle, but not in the FOV of the robot) is not detected in time, leading to collision.}}
    \label{fig:main}
\end{figure}

To ensure safe navigation, there has been growing interest in safety-critical controllers, such as those based on Control Barrier Functions~(CBFs)~\cite{ames_control_2019}, which enforce safety objectives on controlled systems. These controllers theoretically guarantee safety by keeping the robot within a safe set, assuming that nearby obstacles are fully known \textit{a priori}. However, in practice, the effectiveness of these controllers relies heavily on the robot's ability to perceive obstacles in the local environment. If the global planner does not account for the robot's limited sensing capabilities under the presence of unknown obstacles, the theoretical safety guarantees may not hold, leading to possible safety violations.

Several approaches have been proposed to improve the safety of navigation under sensing limitations. Some methods use nonlinear optimization to generate~\cite{spasojevic_perception-aware_2020} or refine~\cite{zhou_raptor_2021} perception-aware paths, but these approaches can be computationally intensive and the solution convergence is not guaranteed. Other approaches first plan a path and then optimize the heading angle to enhance safety based on sensing objectives~\cite{murali_perception-aware_2019, bena_safety-aware_2023, chen_apace_2024}, although they do not always guarantee safety. Another method computes local safe regions during the exploration of candidate paths to account for local sensing information and ensure safety~\cite{oriolo_srt_2004}, but it requires storing all combinations of such regions, making the planning module overly computationally expensive. \revsecond{Additionally, belief-space planning approaches~\cite{van_den_berg_lqg-mp_2011, yang_anytime_2016} explicitly model state-estimation uncertainty within sampling-based frameworks. However, these approaches primarily focus on sensing uncertainty rather than limited field-of-view, which is the focus of our work.}

In this paper, we propose the Visibility-Aware RRT* algorithm that explicitly reasons about the robot's limited sensing capabilities and combines the benefits of sampling-based planning with CBFs to enable safe and efficient navigation in partially unknown environments. Our approach incorporates two types of CBFs: (i) a collision avoidance CBF to ensure that the planned path is collision-free with respect to known obstacles, and (ii) a novel visibility CBF, which we introduce in this work, that guarantees the local tracking controller following the resulting path will always keep the robot within locally collision-free regions. This ensures that the robot can detect unknown obstacles in a timely manner, enabling the robot to avoid them by virtue of the local controller. We demonstrate the effectiveness of our method through extensive simulations and experiments (see Fig.~\ref{fig:main}), comparing it with other planning methods. The results demonstrate that our approach effectively addresses the challenges faced by both CBF-based controller~\cite{ames_control_2019} and Gatekeeper algorithm~\cite{agrawal_gatekeeper_2024}, outperforming existing baselines across multiple scenarios.

\section{PRELIMINARIES}
\subsubsection{System Description}
We consider a robot that is modeled as a continuous-time, control-affine system:
\begin{equation}
    \dot{\vx} = f(\vx) + g(\vx) \vu ,
    \label{eq:dynamics}
\end{equation}
where $ \vx \in \StateSpace \subset \Rn $ is the state, and $ \vu \in \ControlSpace \subset \Rm $ is the control input, with $ \ControlSpace $ being a set of admissible controls for System~\eqref{eq:dynamics}. The functions $f: \StateSpace \to \Rn$ and $g: \ControlSpace \to \Rnm$ are assumed to be locally Lipschitz continuous.

We assume the robot has limited sensing capabilities, modeled via a field-of-view angle $\theta_{\text{fov}} < 360^\circ$ and a maximum sensing range $l_{\text{range}} < \infty$ of the onboard sensor.
% \subsubsection{Linear Quadratic Regulator}
% \input{_II.Preliminaries/b_lqr}
\subsubsection{Control Barrier Functions}

\begin{definition}[Control Barrier Function (CBF) \cite{ames_control_2019}]
   Let $\calC = \{\vx \in \StateSpace : h(\vx) \geq 0\}$, where $h: \StateSpace \rightarrow \RealSpace$ is a continuously differentiable function. The function $h$ is a CBF on $\calC$ for System~\eqref{eq:dynamics} if there exists a class $\calK$ function $\alpha$ such that
   \begin{equation}
       \sup_{\vu \in \ControlSpace} \left[\lieder_f h(\vx) + \lieder_g h(\vx)\vu\right] \geq -\alpha(h(\vx)),
   \label{eq:cbf-constraint}
   \end{equation}
   for all $\vx \in \StateSpace$ and $\lieder_f h(\vx)$ and $\lieder_g h(\vx)$ denote the Lie derivatives of $h$ along $f$ and $g$, respectively.
\end{definition}

\begin{theorem}\cite{ames_control_2019}
   Given a CBF $h$ with the associated set $\calC$, any Lipschitz continuous controller $\vu \in \calK_\textup{cbf}(\vx)$, with $\calK_\textup{cbf}(\vx) \coloneqq \{\vu \in \ControlSpace : \lieder_f h(\vx) + \lieder_g h(\vx)\vu + \alpha(h(\vx)) \geq 0\}$, renders $\calC$ forward invariant for System~\eqref{eq:dynamics}. 
\label{thm:cbf-forward-invariance}
\end{theorem}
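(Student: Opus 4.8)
The plan is to reduce the multivariable forward-invariance claim to a scalar differential inequality satisfied along closed-loop trajectories, and then close the argument with a comparison principle. First I would fix any Lipschitz continuous feedback $\vu:\StateSpace\to\ControlSpace$ with $\vu(\vx)\in\calK_\textup{cbf}(\vx)$ for all $\vx$, and observe that the closed-loop vector field $\vx\mapsto f(\vx)+g(\vx)\vu(\vx)$ is locally Lipschitz (products and compositions of locally Lipschitz maps), so that for each initial condition $\vx_0\in\calC$ there is a unique maximal solution $\vx(\cdot)$ on some interval $[0,T_{\max})$. Forward invariance then means precisely that $\vx(t)\in\calC$ for all $t\in[0,T_{\max})$ whenever $\vx_0\in\calC$.

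Next I would study the scalar signal $\eta(t):=h(\vx(t))$, which is $C^1$ wherever $\vx$ is, since $h$ is continuously differentiable. Differentiating along the trajectory and using $\vu(\vx(t))\in\calK_\textup{cbf}(\vx(t))$ yields
\[
\dot\eta(t)=\lieder_f h(\vx(t))+\lieder_g h(\vx(t))\vu(\vx(t))\;\geq\;-\alpha\bigl(h(\vx(t))\bigr)=-\alpha(\eta(t)),
\]
so $\eta$ obeys the differential inequality $\dot\eta\geq-\alpha(\eta)$ with $\eta(0)=h(\vx_0)\geq 0$.

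Then I would compare $\eta$ with the solution $y$ of the scalar initial value problem $\dot y=-\alpha(y)$, $y(0)=\eta(0)\geq 0$. Because $\alpha$ is class $\calK$ we have $\alpha(0)=0$, so $y\equiv 0$ solves the problem when $\eta(0)=0$, and for $\eta(0)>0$ the (nonincreasing) solution stays bounded below by $0$; in both cases $y(t)\geq 0$. The comparison lemma then gives $\eta(t)\geq y(t)\geq 0$ on $[0,T_{\max})$, i.e.\ $h(\vx(t))\geq 0$ and hence $\vx(t)\in\calC$ throughout the interval of existence, which is exactly forward invariance of $\calC$. An equivalent route is Nagumo's theorem: on $\partial\calC$ one has $h(\vx)=0$, so the CBF condition reduces to $\dot h\geq-\alpha(0)=0$, i.e.\ the closed-loop field is subtangential to $\calC$ on the whole boundary.

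I expect the only delicate point to be the regularity demanded by the comparison step: the textbook comparison lemma is stated for a locally Lipschitz right-hand side, whereas a generic class $\calK$ function need only be continuous and strictly increasing. I would resolve this either by invoking a version of the comparison principle valid for merely continuous $\alpha$ — using that $0$ is an equilibrium of $\dot y=-\alpha(y)$ that cannot be crossed from above, so nonnegativity of $y$ survives even without uniqueness — or, as is standard in the CBF literature, by assuming $\alpha$ locally Lipschitz without loss of generality. A minor accompanying point is to keep the argument strictly forward in time, invoking the differential inequality only on the maximal interval $[0,T_{\max})$ so that nothing is claimed beyond where the solution is defined.
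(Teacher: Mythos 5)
The paper gives no proof of this theorem---it is quoted verbatim from the cited reference---and your argument (differentiate $h$ along the closed-loop trajectory, obtain $\dot\eta\geq-\alpha(\eta)$, and conclude $h(\vx(t))\geq 0$ by the comparison lemma, with the Nagumo subtangentiality condition as an alternative) is exactly the standard proof given there. Your flagging of the regularity needed for the comparison step (a merely continuous class $\calK$ function versus a locally Lipschitz one) is the right delicate point to note, and your resolution is the accepted one, so the proposal is correct and matches the source's approach.
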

\subsubsection{High Order CBF}

The concept of CBFs has been extended to High Order CBFs (HOCBFs) \cite{xiao_control_2019}, which provide a more general formulation that can be used for constraints of high relative degree.

For a $r^\textup{th}$ order differentiable function $h: \StateSpace \rightarrow \RealSpace$, we define a series of functions $\psi_i : \StateSpace  \rightarrow \RealSpace$, $i = 0, \ldots, r$, as:
\begin{subequations}
\label{eq:hocbf-functions}
\begin{align}
   \psi_0(\vx) &\coloneqq h(\vx),  \label{eq:hocbf-functions-h}\\
   \psi_1(\vx) &\coloneqq \dot{\psi}_0(\vx) + \alpha_1(\psi_0(\vx)),  \\
   &\vdots \nonumber \\
   \psi_{r}(\vx) &\coloneqq \dot{\psi}_{r-1}(\vx) + \alpha_{r}(\psi_{r-1}(\vx)),
\end{align}
\label{eq:hocbf-functions-series}
\end{subequations}
where $\alpha_{i}: \RealSpace^{+} \rightarrow \RealSpace^{+}$, $i = 1, \ldots, r$, denote class $\calK$ functions, and $\dot{\psi}_{i}(\vx) = \lieder_f \psi_{i}(\vx) + \lieder_g \psi_{i}(\vx)\vu$. We further define a series of sets $\calC_i$ associated with these functions as:
\begin{subequations}
\label{eq:hocbf-sets} 
\begin{align}
   \calC_1 &\coloneqq \{\vx \in \StateSpace : \psi_0(\vx) \geq 0\},  \\
   \calC_2 &\coloneqq \{\vx \in \StateSpace : \psi_1(\vx) \geq 0\},  \\
   &\vdots \nonumber \\
   \calC_{r} &\coloneqq \{\vx \in \StateSpace : \psi_{r-1}(\vx) \geq 0\}.
\end{align}
\end{subequations}
\begin{definition}[HOCBF \cite{xiao_control_2019}]
   The function $h: \StateSpace \rightarrow \RealSpace$ is a HOCBF of relative degree $r$ for System~\eqref{eq:dynamics} if there exist differentiable class $\calK$ functions $\alpha_1, \alpha_2, \ldots, \alpha_{r}$ such that
   \begin{equation}
   \label{eq:hocbf-constraint}
       \psi_{r}(\vx) \geq 0, \quad    \forall \vx \in \calC_1 \cap \calC_2 \cap \ldots \cap \calC_{r}.
\end{equation}

\end{definition}

\begin{theorem}\cite{xiao_control_2019}
   Given a HOCBF $h$ with the associated sets $\calC_1, \calC_2, \ldots, \calC_{r}$, if $\vx(t_0) \in \calC_1 \cap \calC_2 \cap \ldots \cap \calC_{r}$, then any Lipschitz continuous controller $\vu(t) \in \calK_\textup{hocbf}(\vx)$, with $\calK_\textup{hocbf}(\vx) \coloneqq \{\vu \in \ControlSpace : \psi_{r}(\vx) \geq 0\}$, renders the set $\calC_1 \cap \calC_2 \cap \ldots \cap \calC_{r}$ forward invariant for System~\eqref{eq:dynamics}.
\label{thm:hocbf-forward-invariance}
\end{theorem}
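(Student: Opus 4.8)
The plan is to prove this by a backward (descending) induction on the index $i$, peeling off one class-$\calK$ function at a time, with a scalar comparison lemma as the workhorse at each step. First I would record the immediate consequence of the controller choice: along any closed-loop solution $\vx(\cdot)$ generated by a Lipschitz continuous $\vu(t) \in \calK_\textup{hocbf}(\vx(t))$, the map $t \mapsto \psi_{r-1}(\vx(t))$ is continuously differentiable (since $f$, $g$, the $\alpha_i$, and $\vu$ are all locally Lipschitz, the trajectory is $C^1$, and each $\psi_i$ is differentiable by construction), and by the very definition of $\calK_\textup{hocbf}$ its derivative obeys
\[
\tfrac{d}{dt}\psi_{r-1}(\vx(t)) = \lieder_f\psi_{r-1}(\vx(t)) + \lieder_g\psi_{r-1}(\vx(t))\vu(t) \;\geq\; -\alpha_r\big(\psi_{r-1}(\vx(t))\big),
\]
which is precisely the inequality $\psi_{r}(\vx(t)) \geq 0$ rewritten.

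Next I would invoke the comparison principle for the scalar ODE $\dot z = -\alpha_r(z)$: because $\alpha_r$ is class $\calK$, we have $\alpha_r(0)=0$, so $z\equiv 0$ is an equilibrium and any solution launched from $z(t_0) \geq 0$ remains nonnegative. Combining this with $\psi_{r-1}(\vx(t_0)) \geq 0$ — which holds exactly because $\vx(t_0) \in \calC_{r}$ — and the differential inequality above yields $\psi_{r-1}(\vx(t)) \geq 0$, i.e.\ $\vx(t) \in \calC_{r}$, for all $t \geq t_0$ in the interval of existence. This single step is, in essence, Theorem~\ref{thm:cbf-forward-invariance} applied to $\psi_{r-1}$ viewed as a first-order CBF with class-$\calK$ function $\alpha_r$.

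Then comes the induction proper. Assume $\psi_{j}(\vx(t)) \geq 0$ for all $t \geq t_0$ has already been established for some $j \in \{1,\dots,r-1\}$. Expanding the definition $\psi_{j}(\vx) = \dot\psi_{j-1}(\vx) + \alpha_{j}(\psi_{j-1}(\vx))$ gives $\tfrac{d}{dt}\psi_{j-1}(\vx(t)) \geq -\alpha_{j}(\psi_{j-1}(\vx(t)))$; since $\vx(t_0) \in \calC_{j}$ means $\psi_{j-1}(\vx(t_0)) \geq 0$, the same comparison argument forces $\psi_{j-1}(\vx(t)) \geq 0$, i.e.\ $\vx(t) \in \calC_{j}$, for all $t \geq t_0$. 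Running this from $j=r$ down to $j=1$ (the last step using $\psi_0 = h$) shows $\vx(t) \in \calC_1 \cap \calC_2 \cap \dots \cap \calC_{r}$ for all $t \geq t_0$, which is forward invariance of the intersection; the usual maximal-interval-of-existence caveat is handled in the standard way since the state never leaves the closed set on which the data are well behaved.

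The main obstacle I anticipate is making the comparison step fully rigorous when the $\alpha_i$ are merely continuous (here, differentiable) rather than globally Lipschitz: solutions of $\dot z = -\alpha_i(z)$ need not be unique, so I would phrase the argument through Nagumo's subtangential condition or a one-sided (lower-solution) comparison lemma instead of appealing to uniqueness, using differentiability of $\alpha_i$ to obtain a Lipschitz bound on the compact range of $\psi_{j-1}(\vx(\cdot))$ over any bounded time window. A lesser, bookkeeping-level point is justifying that $t \mapsto \psi_i(\vx(t))$ is differentiable with the claimed derivative along the closed loop, but this follows directly from the assumed differentiability of $h$ (hence of every $\psi_i$, given the $\alpha_i$ are differentiable) together with Lipschitz continuity of $f$, $g$, and $\vu$.
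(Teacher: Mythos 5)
The paper does not prove this theorem; it is quoted verbatim from \cite{xiao_control_2019}, and your backward-induction-plus-comparison-lemma argument is exactly the standard proof given in that reference. Your proposal is correct, including the two caveats you flag (differentiability of the $\alpha_i$ to make each $\psi_i$ well defined along the closed loop, and using a one-sided comparison lemma rather than uniqueness of solutions of $\dot z=-\alpha_i(z)$), so nothing further is needed.
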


\section{PROBLEM FORMULATION \label{sec:problem}}

Consider a robot modeled as the nonlinear system \eqref{eq:dynamics} navigating in an unknown environment~$\calW$. The environment~$\calW$ consists of two types of obstacles: a set $\calO_0 \subset \mathcal W$ of known obstacles at the initial time $t_0$, and a set $\calH \subset \mathcal W$ of unknown (or ``hidden") obstacles that can be sensed by the robot on-the-fly as it navigates through the environment. The true obstacle-free space~$\calS$ is then defined as $\calS = \calW \setminus (\calO_0 \cup \calH)$. A local tracking control policy $\pi$ generates control inputs that track a nominal path from the current to reference state.

The navigation problem is divided into two hierarchical steps: 1) finding a global reference path~$p^\textup{ref}$, and 2) tracking $p^\textup{ref}$ using a local controller~$\pi$.

\textbf{Global Path Planning:} Given a goal state~$\vx_{\textup{goal}} \in \StateSpace$, the objective of the global path planner is to find a global reference path~$p^{\textup{ref}}: \StateSpace \to \calP$, represented by a set of waypoints~$\calP = \{\vx_{j}^{\textup{ref}} \in \calX \}_{j=0}^{N}$, where $\vx_0^{\textup{ref}} = \vx_0$ and $\vx_N^{\textup{ref}} = \vx_{\textup{goal}}$ are the initial and final waypoints in $\calP$, respectively. For any given state~$\vx_t$, the function yields the next waypoint that the robot should navigate towards as:
\begin{equation}
    \vx_{j}^{\textup{ref}} = p^{\textup{ref}}(\vx_t),
    \label{eq:next_waypoint}
\end{equation}
where the index~$j$ increases monotonically with time, and $\vx_{j}^{\textup{ref}} \in \calP \subset \StateSpace$ indicates the immediate waypoint for navigation. The reference path should satisfy:
\begin{equation}
    p^\textup{ref}(\vx_t) \in \calS, \quad \forall t \geq t_0, \quad \forall \vx_t \in \StateSpace.
\end{equation}
However, the true collision-free set~$\calS$ is often unknown in advance. 

\begin{definition}[Collision-Free Path]
We define the known collision-free set at $t_0$ as $\calS_0 \coloneqq \calW \setminus \calO_0$. A reference path $p^\textup{ref}$ is considered \textbf{collision-free} if it satisfies the condition:
\begin{equation}
    p^\textup{ref}(\vx_t) \in \calS_0, \quad \forall t \geq t_0, \quad \forall \vx_t \in \StateSpace.
    \label{eq:first_condition}
\end{equation}
\end{definition}

Let $\vr_t = [x_t, y_t]^\top \in \RealSpace^{2}$ and $\theta_t \in \RealSpace$ denote the position and orientation of the robot with respect to a global frame at time $t$, respectively. The sensing footprint~$\calF_t$, which represents the currently sensed area at time $t$, is defined as:
\begin{align}
\calF_t \coloneqq \bigg\{ & \vp \in \calS  \mid  \, \|\vp - \vr_t\| \leq l_{\textup{range}} \, \cap \nonumber\\
&   | \, \angle(\vp - \vr_t, \theta_t) \, | \leq \frac{\theta_{\textup{fov}}}{2} \, \cap \, \textup{LoS}(\vr_t, \vp) \bigg\},
\end{align}
where $\angle(\vp - \vr_t, \theta_t)$ denotes the angle between the vector $\vp - \vr_t$ and the robot's orientation $\theta_t$, and $\textup{LoS}(\vr_t, \vp)$ indicates that the line of sight from $\vr_t$ to $\vp$ does not intersect any obstacles; i.e., $\forall s \in [0,1],\ \vr_t + s(\vp - \vr_t) \in \calS$.

Given the accumulated sensory information collected from $t_0$ to $t$, we define the known collision-free space~$\mathcal B_t$ at $t$ as:
\begin{equation}
\calB_t \coloneqq \bigcup_{\tau=t_0}^{t} \calF_\tau \subseteq \mathcal S.
\end{equation}
Consequently, the part of the environment that has not been sensed yet is $\calS \setminus \calB_t$, called the unmeasured space at $t$.

It is important to note that the collision-free path criterion~\eqref{eq:first_condition} treats all unmeasured space~$\calS \setminus \calB_{t_0}$ at $t_0$ as free space. However, while following the path $p^\textup{ref}$ using the local controller~$\pi$ after the planning cycle, unknown obstacles $\calH$ might exist within such unmeasured spaces.

Given this context, we introduce a stronger criterion that underlines the path's visibility and safety by requiring the robot to stay within the known local collision-free space~$\calB_t$, thereby mitigating the risk posed by unknown obstacles \cite{oriolo_srt_2004}.

\begin{definition}[Visibility-Aware Path]
A reference path~$p^\textup{ref}$ is called \textbf{visibility-aware} w.r.t. the dynamics~\eqref{eq:dynamics} if under the local tracking controller $\pi$, the updated state $\vx(t+\Delta t)$ satisfies:
\begin{equation}
    \vx(t+\Delta t) \in \calB_t, \quad \forall t \geq t_0, \quad \forall \vx_{t} \in \StateSpace,
\label{eq:second_condition}
\end{equation}
where $\Delta t>0$ is a small time increment.
\end{definition}

We formally define the problem that our global path planner aims to solve:

\begin{problem}[]
Given a path planning problem ($\calS_{0}, \vx_{0}, \vx_{\textup{goal}}$) for System~\eqref{eq:dynamics}, find a path $p^\textup{ref}$ such that for all $j = 0, \ldots, N-1$, the state and input trajectories denoted as~$\bm{\sigma}_{j} := (\vx(\tau), \vu(\tau))$ connecting consecutive waypoints $\vx^\textup{ref}_{j}$ and $\vx^\textup{ref}_{j+1}$ during the time interval $[t_j, t_{j+1})$ satisfies $\vx(\tau) \in \calC_{\textup{col}} \cap \calC_{\textup{vis}}$ and $\vu(\tau) \in \calK_{\textup{col}}(\vx(\tau)) \cap \calK_{\textup{vis}}(\vx(\tau))$ for all $\tau \in [t_j, t_{j+1})$. If such a path exists, return it; otherwise, report failure.
\label{prob:prob}
\end{problem} 

In the above formulation, $\calC_{\textup{col}}$ and $\calC_{\textup{vis}}$ are the sets that can be rendered forward invariant for System~\eqref{eq:dynamics} under the two CBFs, which encode collision-avoidance~\eqref{eq:first_condition} and visibility-awareness~\eqref{eq:second_condition}, respectively. The sets $\calK_{\textup{col}}(\vx)$ and $\calK_{\textup{vis}}(\vx)$ represent the admissible control inputs for System~\eqref{eq:dynamics} that ensure these CBF constraints hold at each state $\vx$. In the following sections, we detail how the global path planner imposes these two constraints~\eqref{eq:first_condition}, \eqref{eq:second_condition} in its search (see Sec.~\ref{sec:planner}), and illustrate how these constraints ensure that any newly detected, previously unknown obstacles remain avoidable by the local tracking controller (see Sec.~\ref{sec:controller}).

\textbf{Local Tracking Control:} A local control policy~$\pi$ is used to stabilize the systems around the subsequent waypoint~$\vx_{j}^{\textup{ref}}$~\eqref{eq:next_waypoint} and ensure that the robot remains within the true collision-free set~$\calS$. The local controller is assumed to be capable of tracking the path between subsequent waypoints with a tracking error bounded by a maximum displacement~$\epsilon$. As $p^{\textup{ref}}$ continuously updates the waypoint, it drives the robot towards the goal state~$\vx_{\textup{goal}}$. There exist many methodologies to design such controllers~\cite{ames_control_2019, agrawal_gatekeeper_2024}. Since the primary focus of this paper is on designing a visibility-aware path planner, we will discuss two case studies of such controllers in Sec.~\ref{sec:controller} and Sec.~\ref{sec:experiments} to highlight the advantages of our proposed planner when utilized alongside various local controllers.

\section{GLOBAL PATH PLANNER \label{sec:planner}}

In this section, we introduce the Visibility-Aware RRT* algorithm, which aims to solve the motion planning problem under limited sensing capabilities while satisfying the visibility condition given in \eqref{eq:second_condition}. This algorithm builds upon the LQR-RRT* framework~\cite{perez_lqr-rrt_2012} by incorporating two CBFs that guide the search process to ensure satisfaction of both the collision-free~\eqref{eq:first_condition} and visibility-aware~\eqref{eq:second_condition} properties. 

Following a common practice in motion planning problems, we use the (kinematic) unicycle model in the global path planner as a simple dynamical model, and the dynamic unicycle model, which takes translational acceleration as input instead of velocity, in the tracking controller to effectively capture more realistic dynamics and constraints of the robot.

\subsection{Visibility-Aware RRT* \label{subsec:lqr_cbf_rrtstar}}

The LQR-RRT*~\cite{perez_lqr-rrt_2012} is a sampling-based motion planning algorithm that incorporates the Linear Quadratic Regulator~(LQR) controller as a steering function for nonlinear systems with non-holonomic constraints. Recently, Yang \textit{et al.}~\cite{yang_lqr-cbf-rrt_2023} proposed an extension to the LQR-RRT* that incorporates CBF constraints during the planning process. Instead of formulating and solving Quadratic Programs (QPs) iteratively, as done in existing works \cite{yang_sampling-based_2019}, it checks whether the CBF constraints are satisfied and uses them as the termination condition for the steering process.

In our work, we modify the LQR-RRT* baseline to include the robot's orientation angle $\theta$ at each node, in addition to the position coordinates $(x, y)$. This modification allows it to consider the robot's rotational motion and field of view during the planning process, essential for accommodating the robot's sensing limitations.

The overall algorithm, as shown in Alg.~\ref{alg:Visibility-RRT*}, iteratively expands a tree~$(\calV, \calE)$ in the configuration space~$\calW$ until a maximum number of iterations (denoted as $\texttt{maxIter}$) is reached. The components of the algorithm are as follows:

\begin{algorithm}[t]\scriptsize
\textbf{Initialization:}  $\calV \leftarrow$ $\{\vx_0\}$; $\calE \leftarrow \emptyset$;

\For{$k=1, \ldots, \texttt{\textup{maxIter}}$}
{
    $\vx_{\textup{rand}} \leftarrow \texttt{Sample}()$;\\
    $\vx_{\textup{nearest}} \leftarrow \texttt{Nearest}(\calV, \vx_{\textup{rand}})$;\\
    $\vx_{\textup{rand}} \leftarrow \texttt{SetAngle}(\vx_{\textup{rand}}, \vx_{\textup{nearest}})$;\\
    $\vx_{\textup{new}}, \_ \leftarrow \texttt{LQR-CBF-Steer}(\vx_{\textup{nearest}}, \vx_{\textup{rand}})$;\\
    $\calX_{\textup{near}} \leftarrow \texttt{NearbyNode}(\calV, \vx_{\textup{new}})$;\\
    $\vx_{\textup{min}} \leftarrow \texttt{ChooseParent}(\calX_{\textup{near}}, \vx_{\textup{new}})$;\\
    \If{$\vx_{\textup{min}} \neq \texttt{\textup{None}}$}
    {
        $\calV \leftarrow \calV \cup \{\vx_{\textup{new}}\}$;\\
        $\calE \leftarrow \calE \cup \{(\vx_{\textup{min}}, \vx_{\textup{new}})\}$;\\
        $\texttt{Rewire}(\calX_{\textup{near}}, \vx_{\textup{new}})$;\\
    }
}
$\calP \leftarrow \texttt{ExtractPath}(\calV, \vx_{\textup{0}}, \vx_{\textup{goal}})$;\\
\Return $\calP$;
\caption{\texttt{Visibility-Aware RRT*}}
\label{alg:Visibility-RRT*}
\end{algorithm}

\begin{algorithm}[t]\scriptsize
\texttt{minCost} $\leftarrow \infty$; $\vx_{\textup{min}} \leftarrow \texttt{\textup{None}}$;\\
\For{$\vx_{\textup{near}} \in \mathcal{X}_{\textup{near}}$}
{
    $\_, \bm{\sigma} \leftarrow \texttt{LQR-CBF-Steer}(\vx_{\textup{near}}, \vx_{\textup{new}})$;\\
    \If{$\bm{\sigma} \neq \texttt{\textup{None}}$}
    {
        \If{$\vx_{\textup{near}}.\texttt{\textup{cost}} + \texttt{\textup{Cost}}(\bm{\sigma}) < \texttt{\textup{minCost}}$}
        {
            \texttt{minCost} $\leftarrow \vx_{\textup{near}}.\texttt{cost} + \texttt{Cost}(\bm{\sigma})$;\\
            $\vx_{\textup{min}} \leftarrow \vx_{\textup{near}}$;\\
        }
    }
}
$\vx_{\text{new}}.\texttt{parent} \leftarrow \vx_{\text{min}}$;\\
$\vx_{\text{new}}.\texttt{cost} \leftarrow \texttt{minCost}$;\\
\Return $\vx_{\textup{min}}$;
\caption{\texttt{ChooseParent}($\mathcal{X}_{\textup{near}}$, $\vx_{\textup{new}}$)}
\label{alg:ChooseParent}
\end{algorithm}

\begin{itemize}
\item \texttt{Sample}: Uniformly samples a random state~$\vx_\text{rand} = [x_\text{rand}, y_\text{rand}, \theta_\text{rand}]^\top$ within the configuration space to expand the tree towards.

\item \texttt{Nearest}: Finds the nearest node~$\vx_\text{nearest}$ in the current vertices in the tree~$\calV$ to the sampled state~$\vx_\text{rand}$.

\item \texttt{SetAngle}: Assigns the orientation angle~$\theta_\text{rand}$ to the sampled state~$\vx_{\text{rand}}$ based on the direction from the nearest node $\vx_{\text{nearest}}$ to $\vx_{\text{rand}}$: $\theta_{\text{rand}} = \arctan\left(\frac{y_{\text{rand}} - y_{\text{nearest}}}{x_{\text{rand}} - x_{\text{nearest}}}\right)$.

\item \texttt{LQR-CBF-Steer}: Generates a new node $\vx_\text{new}$ by steering from $\vx_\text{nearest}$ towards $\vx_\text{rand}$. The details are described below.

\item \texttt{NearbyNode}: Identifies a set of nearby nodes~$\calX_{\text{near}}$ within a pre-defined euclidean distance from $\vx_{\text{new}}$.

\item \texttt{ChooseParent} (Alg.~\ref{alg:ChooseParent}): Selects the best parent node~$\vx_\text{min}$ for the new node~$\vx_\text{new}$ from the set of nearby nodes~$\mathcal{X}_\text{near}$ based on the cost and the feasibility of the connecting trajectory generated by the steering function. Given a trajectory $\bm{\sigma} = (\vx(t), \vu(t))$, where $t \in [0, T]$ and $T$ is the duration of the trajectory, the evaluated quadratic cost is defined as:
\begin{equation}
 \texttt{cost}(\bm{\sigma}) = \int_{0}^{T} \vx(t)^\top Q \vx(t) + \vu(t)^\top R \vu(t) \, \mathrm{d}t, 
    \label{eq:trajectory_cost}
\end{equation}
where $Q \succeq 0$ and $R \succ 0$ are weight matrices. 

\item \texttt{Rewire} (Alg.~\ref{alg:Rewire}): Performs the rewiring process by considering the new node~$\vx_\text{new}$ as a potential parent for the nearby nodes in $\calX_{\text{near}}$. If a trajectory from $\vx_{\text{new}}$ to any nearby node~$\vx_{\text{near}} \in \calX_{\text{near}}$ is found with a smaller cost~\eqref{eq:trajectory_cost}, the tree is optimized by rewiring $\vx_{\text{near}}$ to $\vx_{\text{new}}$.

\item \texttt{ExtractPath}: Searches for a path from $\vx_\text{goal}$ to $\vx_\text{start}$ within the tree~$(\calV, \calE)$. If a feasible path is found, it returns the solution path as a set of waypoints $\calP$.
\end{itemize}

\begin{algorithm}[t]\scriptsize
\For{$\vx_{\textup{near}} \in \calX_{\textup{near}}$}
{
    $\_, \bm{\sigma} \leftarrow \texttt{LQR-CBF-Steer}(\vx_{\textup{new}}, \vx_{\textup{near}})$;\\
    \If{$\bm{\sigma} \neq \texttt{\textup{None}}$}
    {
        \If{$\vx_{\textup{new}}.\texttt{\textup{cost}} + \texttt{\textup{Cost}}(\bm{\sigma}) < \vx_{\textup{near}}.\texttt{\textup{cost}}$}
        {
            $\vx_{\textup{near}}.\texttt{parent} \leftarrow \vx_{\textup{new}}$;\\
            $\vx_{\textup{new}}.\texttt{cost} \leftarrow \texttt{\textup{Cost}}(\bm{\sigma})$;\\
            
        }
    }
}
\caption{\texttt{Rewire}($\calX_{\textup{near}}$, $\vx_{\textup{new}}$)}
\label{alg:Rewire}
\end{algorithm}

\begin{algorithm}[t]\scriptsize
$\calX' \leftarrow [\vx_{\textup{start}}]$; $\calU' \leftarrow []$; $\vx'_0 \leftarrow \vx_{\textup{start}} $;\\
$K_\textup{lqr} \leftarrow \texttt{LQRSolver}(\vx_{\textup{next}})$\\
\For{$t = 1, \ldots, T$}
{
    $\vx'_{t}, \vu'_{t} \leftarrow \texttt{Integrator}(\vx'_{t-1}, K_\textup{lqr})$; \\
    \If{ $\forall \, \texttt{\textup{CBFConstraint}}(\vx'_{t}, \vu'_{t})$ are satisfied}
    {
        $\calX'.\texttt{append}(\vx'_{t})$; $\calU'.\texttt{append}(\vu'_{t})$;\\
    }
    \Else
    {
        $\textbf{break}$;
    }
}
$\vx_{\textup{new}} \leftarrow \vx'_{t}$;\\
\Return $\vx_{\textup{new}}, \bm{\sigma} = (\calX', \calU')$;\\
\caption{\texttt{LQR-CBF-Steer}($\vx_{\textup{start}}, \vx_{\textup{next}}$)}
\label{alg:lqr_cbf_steer}
\end{algorithm}

The \texttt{LQR-CBF-Steer} is the key function that allows the algorithm to use LQR and CBF~\eqref{eq:cbf-constraint} as an extension heuristic between two nodes $\vx_{\text{start}}$ and $\vx_{\text{next}}$ (see Alg.~\ref{alg:lqr_cbf_steer}). It first linearizes the dynamics around $\vx_\text{next}$ and computes the optimal gain~$K_\text{lqr}$ by solving the algebraic Riccatti equation. The function then employs the LQR controller~$\pi_{\text{lqr}}$ to generate a sequence of intermediate states $\calX' = \{\vx_{\text{start}}, \vx'_1, \ldots, \vx'_T\}$ and control inputs $\calU' = \{\vu'_1, \ldots, \vu'_T\}$, steering the state from $\vx_{\text{start}}$ towards $\vx_{\text{next}}$ based on the linearized dynamics. Throughout this process, it verifies that all CBF constraints are satisfied for each state-action pair $(\vx'_t, \vu'_t)$ at each discretized time step $t$. If any constraint is violated, the steering extension is immediately terminated \cite{yang_lqr-cbf-rrt_2023}.

One of the main advantages of this method is its efficiency in handling multiple CBF constraints. By avoiding the need to solve QPs iteratively, the algorithm significantly reduces computational overhead, and also circumvents the issue of recursive feasibility that arises from the parameters of CBF and QP \cite{parwana_recursive_2022}. In the following sections, we introduce two types of CBFs to ensure that the resulting path is both \textit{\textbf{collision-free}}~\eqref{eq:first_condition} and \textit{\textbf{visibility-aware}}~\eqref{eq:second_condition}.

\subsection{HOCBF for Collision Avoidance \label{subsec:cbf-collision-avoidance}}
The unicycle model is as follows:
\begin{align}
\dot{\vx} &= \begin{bmatrix}
\cos\theta & 0\\
\sin\theta & 0\\
0 & 1
\end{bmatrix} 
\begin{bmatrix}
v \\
\omega
\end{bmatrix} = g(\vx) \vu,
\label{eq:unicycle-dynamics}
\end{align}
where $\vx = [x, y, \theta]^\top \in \StateSpace$ is the state vector and the control input $\vu = [v, \omega]^\top \in \ControlSpace$ represents the translational velocity~$v$ and angular velocity~$\omega$. 

Hereafter, we abuse the notation and use $\vx_t$ and $\vu_t$ to refer to $\vx'_t$ and $\vu'_t$ in the context of the \texttt{LQR-CBF-Steer} function. Let one of the obstacles with radius~$l_{\text{obs}} \in \RealSpace^{+}$ be located at $(x_{\text{obs}}, y_{\text{obs}}) \in \RealSpace^{2}$. The constraint function~\eqref{eq:hocbf-functions-h} for collision avoidance, considering the robot radius~$l_\text{robot}$ and the maximum tracking error~$\epsilon$ mentioned in Sec.~\ref{sec:problem}, is:
\begin{equation}
h_{\textup{col}}(\vx_{t}) = (x_{t} - x_{\text{obs}})^2 + (y_{t} - y_{\text{obs}})^2 - (l_{\text{obs}} + l_{\text{robot}} + \epsilon)^2.
\label{eq:collision-cbf}
\end{equation}
Given the dynamics~\eqref{eq:unicycle-dynamics}, the function~$h_{\textup{col}}(\vx_{t})$ in \eqref{eq:collision-cbf} has relative degree $r=1$ for the control input $v$ and $r=2$ for the control input $\omega$. To simplify the construction of the CBF constraint, we fix the translational velocity $v$ to a constant value and only control the angular velocity $\omega$~\cite{yang_sampling-based_2019}.

Then, the HOCBF constraint~\eqref{eq:hocbf-constraint} with $r=2$ is~\cite{yang_sampling-based_2019}:
\begin{align}
\psi_{\textup{col}}(\vx_{t}) &= 2(x_{t} - x_{\text{obs}})v^2\cos^2{\theta_{t}}+2(y_{t} - y_{\text{obs}})v^2\sin^2{\theta_{t}}\nonumber\\
&+[2(y_{t} - y_{\text{obs}})v\cos{\theta_{t}}-2(x_{t} - x_{\text{obs}})v\sin{\theta_{t}}]\omega_{t} \nonumber\\
&+\gamma_1 \lieder_f h_{\textup{col}}(\vx_{t})+\gamma_2 h_{\textup{col}}(\vx_{t}) \geq 0,
\label{eq:collision-cbf-constraint}
\end{align}
where $\gamma_1$ and $\gamma_2$ are positive constants that are designed to satisfy the HOCBF condition~\eqref{eq:hocbf-constraint}, and
\begin{equation}
\lieder_f h_{\textup{col}}(\vx_{t}) = 2v(x_{t} - x_{\text{obs}})\cos{\theta_{t}} + 2v(y_{t} - y_{\text{obs}})\sin{\theta_{t}}.
\end{equation}
Constraint~\eqref{eq:collision-cbf-constraint} is then incorporated into the \texttt{LQR-CBF-Steer} function, ensuring that every node added to the tree is collision-free, as illustrated in Fig.~\ref{fig:cbf}a.

\begin{figure}[t]
    \centering
    \includegraphics[width=0.85\linewidth]{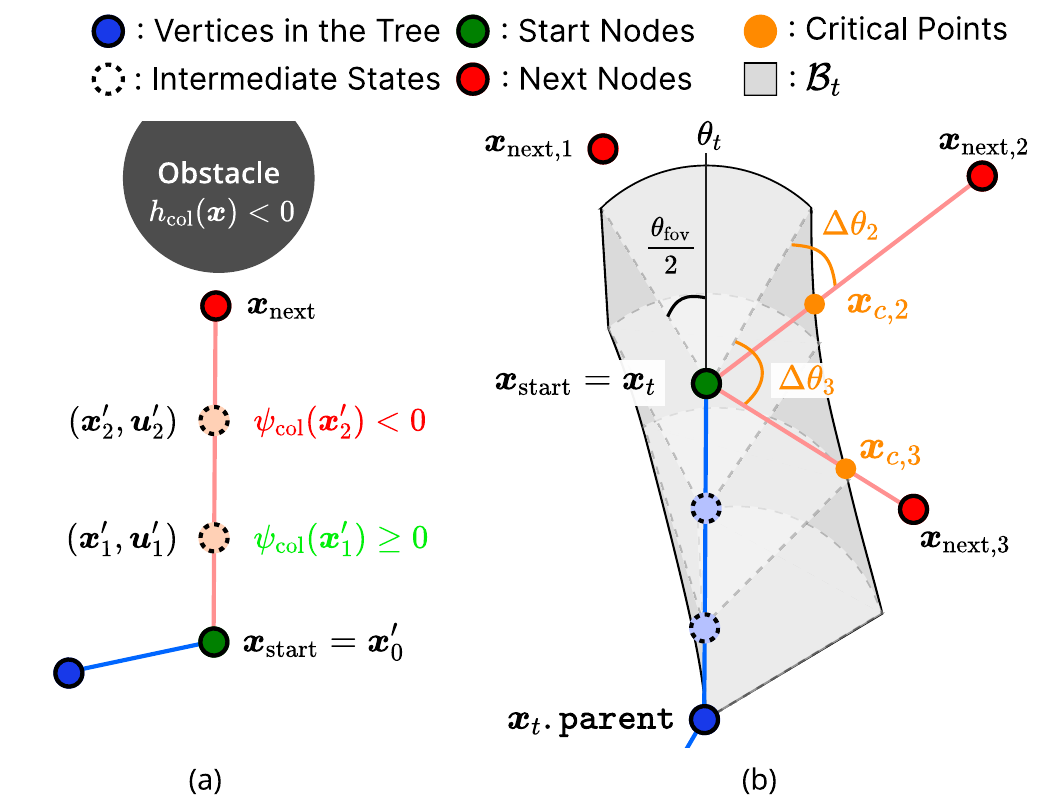}
    \caption{Illustration of the CBF constraints in the \texttt{LQR-CBF-Steer} function. The \texttt{Integrator} generates a sequence of intermediate states using the LQR gain~$K_\text{lqr}$. At each intermediate state, the \texttt{CBFConstraint}s are checked to ensure that the state satisfies the given constraints. If any constraint is violated, the steering process is terminated, and the last satisfied state is returned as the new node. (a) An example demonstrating the collision avoidance HOCBF constraint~\eqref{eq:collision-cbf-constraint}. (b) An example demonstrating the visibility CBF constraint~\eqref{eq:visibility-cbf-constraint}. $\vx_{\text{next},1}$ is already within the FOV at $\vx_t$, satisfying the visibility constraint. For $\vx_{\text{next},2}$, although it is outside the current FOV, it satisfies the visibility constraint. $\vx_{\text{next},3}$ might satisfy the visibility constraint $h_{\text{vis}}(\vx_t) \geq 0$~\eqref{eq:visibility-cbf}, but it violates the visibility CBF constraint $\psi_{\textup{vis}}(\vx_t) \geq 0$~\eqref{eq:visibility-cbf-constraint}, causing the steering to terminate.}
    \label{fig:cbf}
\end{figure}

\subsection{Incorporating Visibility Constraint into CBF \label{subsec:cbf-visibility}}

The collision-free path planning discussed in Section~\ref{subsec:cbf-collision-avoidance} assumes the environment is fully known, i.e., $\calS_0 = \calS$. However, partially-known environments and limited perception, such as restricted FOV and sensing range, can pose significant challenges in ensuring safe navigation. To mitigate the risks posed by these limitations, we introduce a visibility constraint formulated as a CBF to ensure that the generated path allows for any newly-detected obstacles to be avoided by a local tracking controller.
\begin{definition}[Critical Point] A critical point, denoted as $\vx_c = [x_c, y_c, \theta_c]^\top$, is the intersecting point between the frontier~$\partial \calB_t$, which represents the boundary of the explored collision-free space~$\calB_t$ up to the current time~$t$, and the path from the start node $\vx_{t}$ to the next node $\vx_\textup{next}$. The angle $\theta_c$ represents the direction from $\vx_{t}$ to $\vx_c$.
\label{def:critical-point}
\end{definition}
The critical point~$\vx_c$ represents the last sensed point along the linearized path (as illustrated in Fig.~\ref{fig:cbf}b), beyond which the robot has no updated information about potential unknown obstacles. The goal is to enforce the robot to have seen the critical point within its FOV before reaching it. To achieve this, we formulate a visibility constraint based on the time required for the robot to rotate towards and observe the critical point, denoted as $t_\text{rot}(\theta_{t})$, and the time needed to reach it, denoted as $t_\text{reach}(x_{t}, y_{t})$. The visibility constraint requires that $t_\text{reach}(x_{t}, y_{t}) \geq t_\text{rot}(\theta_{t})$, guaranteeing that the robot will have sufficient time to detect any potential hidden obstacles at the critical point and allow the local tracking controller~$\pi$ to take appropriate actions if necessary.

\subsubsection{Time-to-Reach} Given the robot's current position~$\vr_{t} = [x_{t}, y_{t}]^\top$ and the critical point~$\vx_c$, let $\Delta \vr = \sqrt{(x_{t} - x_c)^2 + (y_{t} - y_c)^2}$ denote the Euclidean distance between them. To account conservatively for the robot radius~$l_\text{robot}$ and the maximum tracking error~$\epsilon$ of the local controller~$\pi$, we define $\Delta d = \Delta \vr - l_\text{robot} - \epsilon$. The time-to-reach is:
\begin{equation}
t_\text{reach}(x_{t}, y_{t}) = \frac{\Delta d}{v},
\label{eq:t_reach}
\end{equation}
where $v$ is the constant velocity assumed during the planning cycle as mentioned in Sec.~\ref{subsec:cbf-collision-avoidance}.

\subsubsection{Time-to-Rotate}
Given the current robot orientation~$\theta_{t}$, the amount of angle that the robot should rotate to observe the critical point using the onboard sensor is:
\begin{equation}
\Delta \theta = |\theta_{t} - \theta_c| - \frac{\theta_{\text{fov}}}{2}.
\end{equation}
Using the cosine difference, we can rewrite $\Delta \theta$ as:
\begin{equation}
\Delta \theta = \arccos{(\cos{\theta_{t}}\cos{\theta_c} + \sin{\theta_{t}}\sin{\theta_c})} - \frac{\theta_{\text{fov}}}{2}.
\end{equation}
Then, time-to-rotate~$t_\text{rot}(\theta_t)$ can be obtained as:
\begin{equation}
t_\text{rot}(\theta_t) = \frac{\Delta \theta}{\bar{\omega}_{t}},
\end{equation}
where $\bar{\omega}_{t}$ is the average angular velocity computed by simulating the LQR controller~$\pi_{\text{lqr}}$ for a rotation of $\Delta \theta$ starting from time $t$, i.e., $\bar{\omega}_{t} = \frac{1}{T_\omega} \sum_{\tau=t}^{t + T_\omega} \omega_{\tau}^{\text{lqr}}$. 
$\omega_{\tau}^{\text{lqr}}$ denotes the angular velocity generated by the LQR controller~$\pi_{\text{lqr}}$ at time step $\tau$, and $T_\omega$ denotes the total number of time steps required for the rotation.

\subsubsection{Visibility CBF}
Based on the visibility constraint $t_\text{reach}(x_{t}, y_{t}) \geq t_\text{rot}(\theta_{t})$, we define the visibility CBF candidate $h_{\textup{vis}}:\mathbb R^3\rightarrow \mathbb R$ as:
\begin{equation}
h_{\textup{vis}}(\vx_{t}) = t_\text{reach}(x_{t}, y_{t}) - t_\text{rot}(\theta_{t}).
\label{eq:visibility-cbf}
\end{equation}
Then, the CBF constraint with $r=1$~\eqref{eq:cbf-constraint} is defined as:
\begin{equation}
\psi_{\textup{vis}}(\vx_{t}) = \dot{h}_{\textup{vis}}(\vx_{t}) + \gamma_3 \, h_{\textup{vis}}(\vx_{t}) \geq 0,
\label{eq:visibility-cbf-constraint}
\end{equation}
where $\gamma_3$ is positive constant that is designed to satisfy the CBF condition. The gradient~$\dot{h}_{\textup{vis}}(\vx_{t})$ can be derived as:
\begin{align}
\dot{h}_{\textup{vis}}(\vx_{t}) &= L_f h_{\textup{vis}}(\vx_{t}) + L_g h_{\textup{vis}}(\vx_{t}) \vu_{t} \\
&= \frac{x_{t}-x_c}{\Delta \vr} \cos \theta_{t} + \frac{y_{t}-y_c}{\Delta \vr} \sin \theta_{t} \nonumber \\
&\quad - \frac{1}{\bar{\omega}_{t}} \left(\frac{\sin \theta_{t} \cos \theta_c - \cos \theta_{t} \sin \theta_c}{\sqrt{1-z_{t}^2}} \right) \omega_{t},
\end{align}
where $z_{t} = \cos{\theta_{t}}\cos{\theta_c} + \sin{\theta_{t}}\sin{\theta_c}$.

\begin{proof}
Let us verify that condition \eqref{eq:cbf-constraint} holds for all $\vx \in \StateSpace$. Note that $L_f h_{\textup{vis}}(\vx_{t})$ is bounded in $[-1,1]$, and $L_g h_{\textup{vis}}(\vx_{t})$ takes the value $\pm 1/\bar{\omega}_{t}$ depending on $\theta_{t}$ and $\theta_{c}$. Plus, $\bar{\omega}_{t}$ is bounded because it is computed from the average LQR control input, which converges to zero over time, and the largest control input is bounded in the control space~$\calX$. Therefore, a sufficiently large control bound on the angular velocity~$\omega_{\max}$ satisfying $\omega_{\max} \geq \bar{\omega}_{t}$ guarantees that there exists a control input $\omega_{t}$ such that the condition~\eqref{eq:cbf-constraint} is satisfied, even at the boundary $h_{\textup{vis}}(\vx_{t})=0$. Hence, $h_{\textup{vis}}$ is a valid CBF.
\end{proof}

The visibility CBF constraint $\psi_{\textup{vis}}(\vx_{t}) \geq 0$ is incorporated into the \texttt{LQR-CBF-Steer} function, as illustrated in Fig.~\ref{fig:cbf}b, alongside the collision avoidance CBF constraint~\eqref{eq:collision-cbf-constraint}. By enforcing both constraints during the steering process, the algorithm generates paths that satisfy the feasibility of both safety conditions.

One issue in this formulation is that computing the location of the critical point~$\vx_c$ whenever the \texttt{LQR-CBF-Steer} function is called, either during tree expansion or rewiring, is computationally intensive since it requires calculating the entire local collision-free space~$\calB_t$~\cite{oriolo_srt_2004}. To mitigate this issue in the implementation, we assume that $\calB_t$ forms a linear tube along the LQR trajectory between $\vx_t$ and the parent node of $\vx_t$. This allows us to calculate the critical point~$\vx_c$ geometrically without the need to track and store the entire $\calB$ at each iteration, while still providing a reasonable approximation of the critical point.

\section{LOCAL TRACKING CONTROLLER \label{sec:controller}}

After the global planning cycle described in Sec.~\ref{sec:planner}, the controller~$\pi$ tracks the waypoints~$\vx_{j}^{\text{ref}} \in \calP$ enabling the robot to navigate towards the goal. During navigation, the robot utilizes its onboard sensor to create a map of the environment and detect obstacles on-the-fly. The environment may contain unknown obstacles that were not considered during the global planning phase. The controller must ensure that the robot remains within the true collision-free set~$\calS$ during navigation, avoiding collisions with both known and unknown obstacles.

Two main strategies exist for designing safety-critical tracking controllers in such cases. The first strategy treats unmeasured space as free space and focuses on avoiding previously known and newly detected obstacles. A common approach to this problem is the use of CBFs to formally guarantee collision avoidance. CBF-based quadratic programs (CBF-QPs) have emerged as a popular and computationally efficient method for enforcing safety-critical control objectives~\cite{ames_control_2019}. However, a notable challenge emerges under limited sensing capabilities. Specifically, if an obstacle is detected in close proximity to the robot, particularly during rotational maneuvers, the CBF-QP may become infeasible as there might not be a feasible solution to avoid the obstacle due to the proximity and the robot's dynamics.

\begin{figure*}[t]
\centering
    \subfloat[Env. 1 - Baseline 3]{\includegraphics[width=0.22\textwidth]{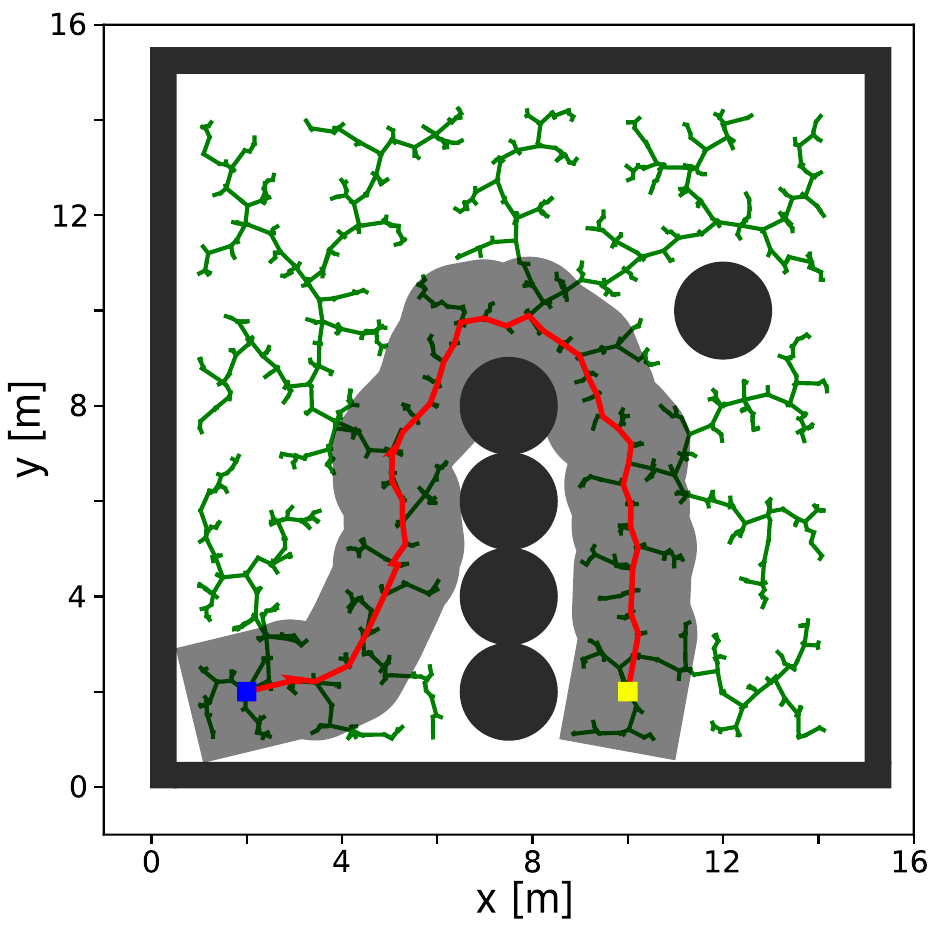}\label{fig:env1_baseline}}
    \subfloat[Env. 1 - Ours]{\includegraphics[width=0.22\textwidth]{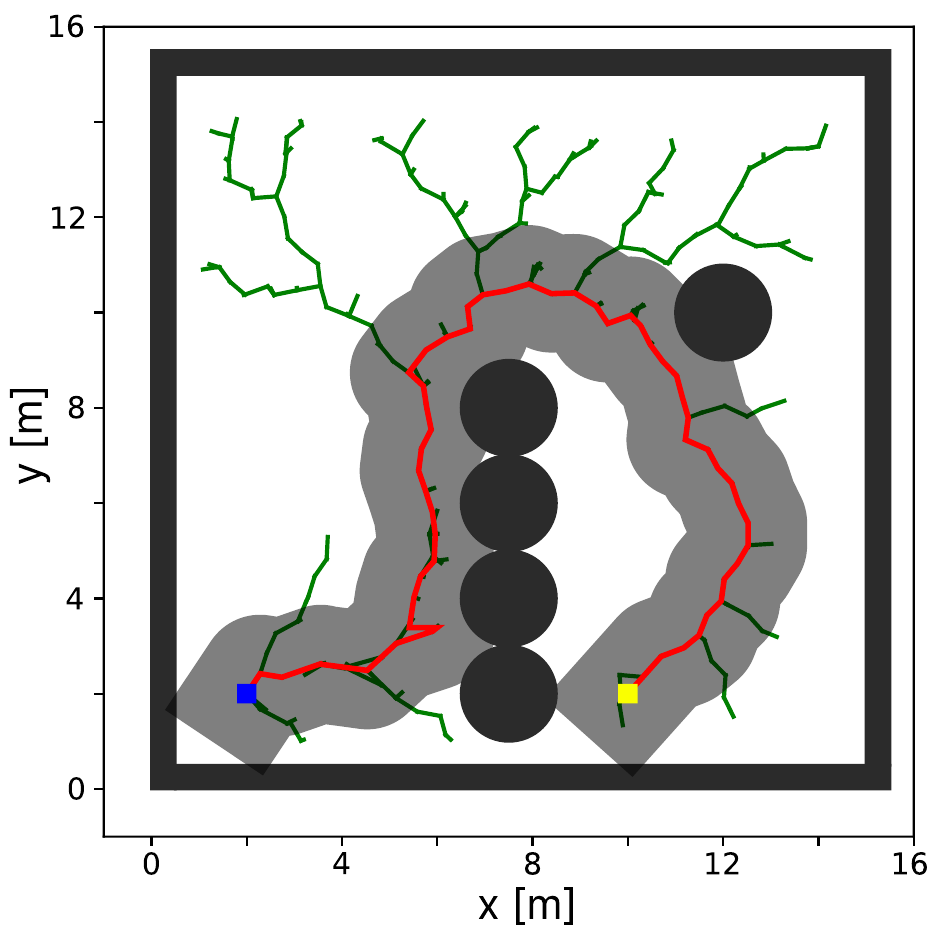}\label{fig:env1_visibility}} 
    \subfloat[Env. 2 - Baseline 3]{\includegraphics[width=0.24\textwidth]{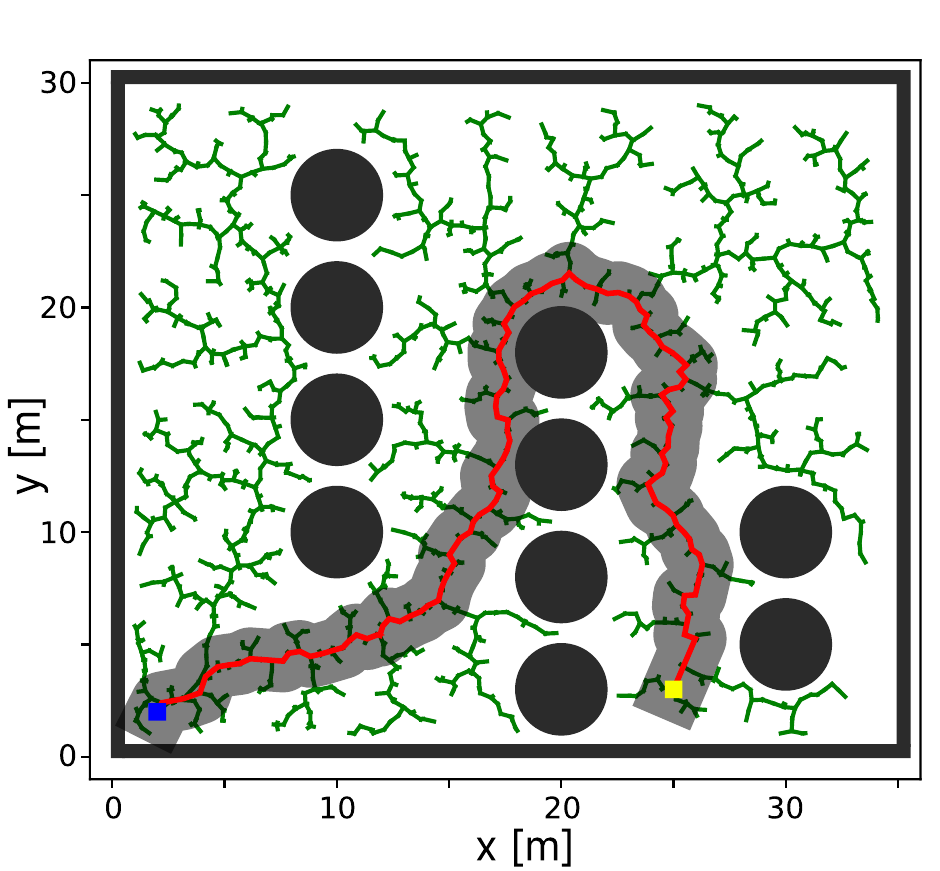}\label{fig:env2_baseline}}
    \subfloat[Env. 2 - Ours]{\includegraphics[width=0.24\textwidth]{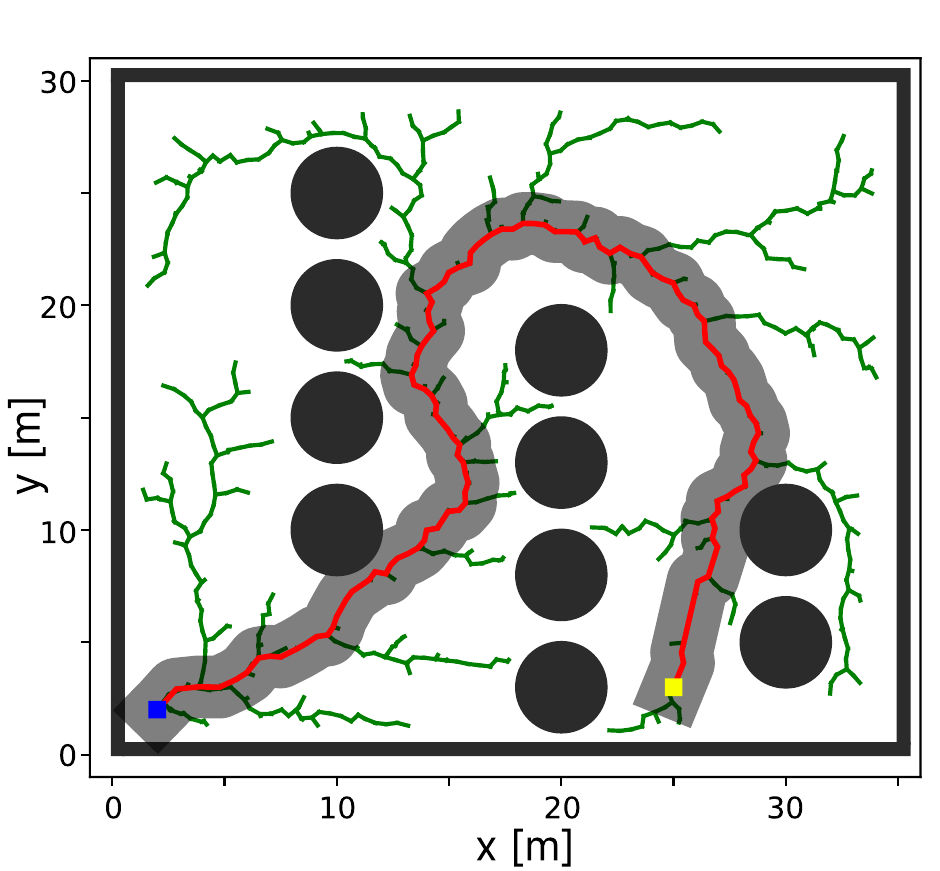}\label{fig:env2_visibility}}
    \caption{Visualization of the global planning results generated by Baseline 3 and the proposed method in two environments. $\texttt{maxIter}$ is set to 2000 and 3000 for Env. 1 and Env. 2, respectively. The blue and yellow squares represent the start and goal position. The black circles represent the known obstacles. The green lines depict the edges of the tree~$\calE$ appended during the planning process. The red line depict the final reference path. The shaded areas in gray represent the local collision-free set~$\calB_t$ that the robot will sense while following the reference path.
    }
\label{fig:planner}
\vspace{-5pt}
\end{figure*}

\begin{figure}[t]
\centering
    \subfloat[CBF-QP - Baseline 3]{\includegraphics[width=0.46\linewidth]{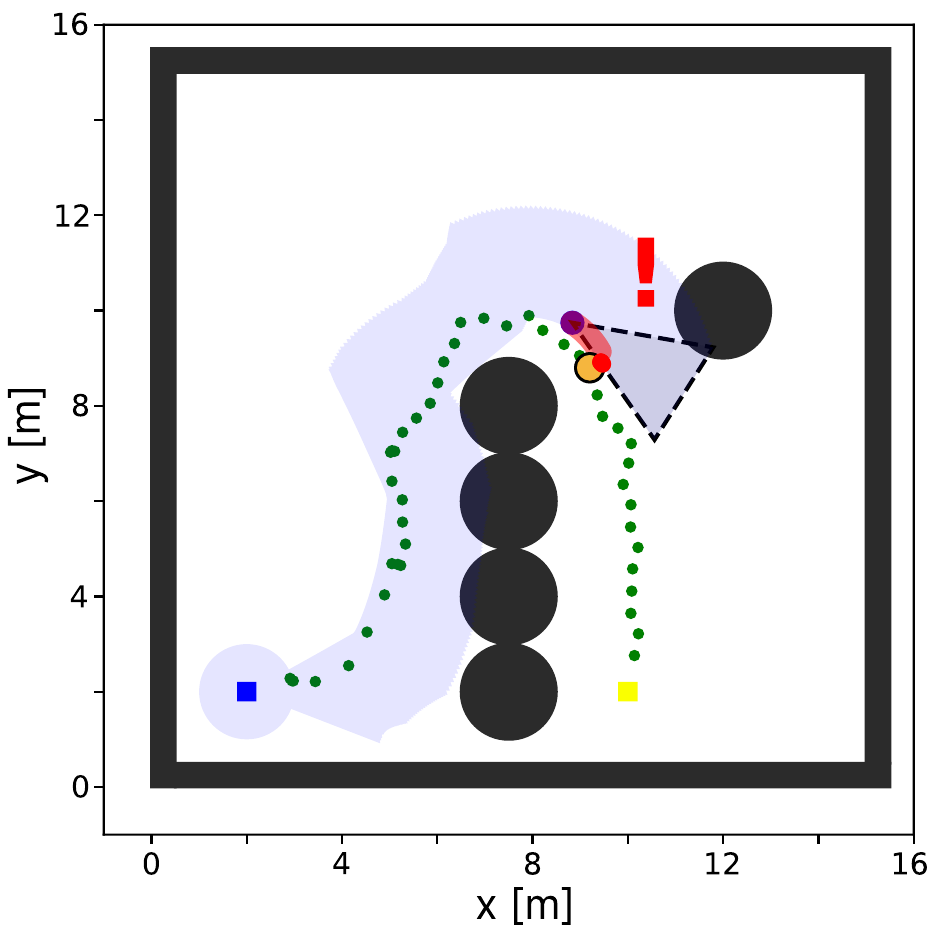}\label{fig:cbf_qp_fail}}
    \subfloat[CBF-QP - Ours]{\includegraphics[width=0.46\linewidth]{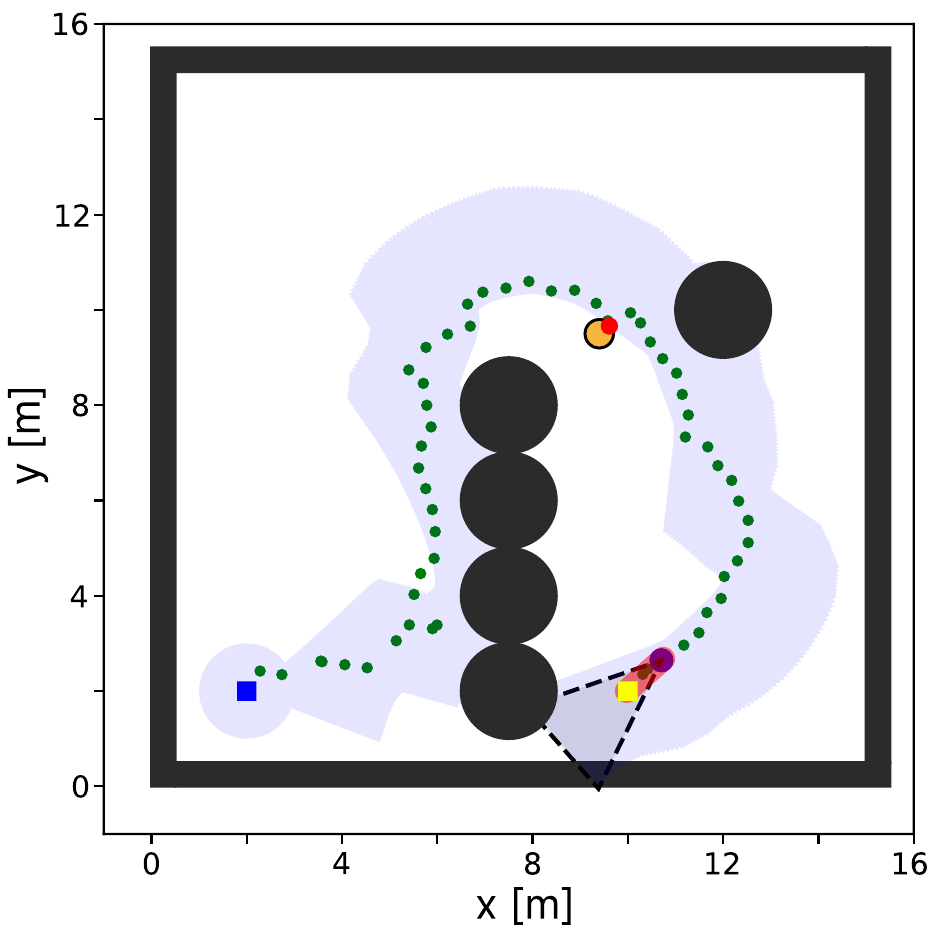}\label{fig:cbf_qp_success}} 
    \caption{
    Visualization of the CBF-QP experiments for Env. 1 with a $45^\circ$ FOV. The reference waypoints (green dots) correspond to the paths in Fig.~\ref{fig:env1_baseline} and Fig.~\ref{fig:env1_visibility}. The blue shaded areas represent the actual local collision-free set~$\calB_t$ collected from the onboard sensor. The orange circles are the hidden obstacles~$\calH$ and the red dots indicate the detection points of these hidden obstacles. (a) The CBF-QP becomes infeasible when it detects the hidden obstacle, as the robot is already within an unsafe distance to perform collision avoidance. (b) Tracking the reference path from our method, the CBF-QP successfully avoids the hidden obstacle and reaches the goal.
}
\label{fig:cbf-qp}
\vspace{-5pt}
\end{figure}

\begin{figure}[t]
\centering
    \subfloat[Gatekeeper- Baseline 3]{\includegraphics[width=0.46\linewidth]{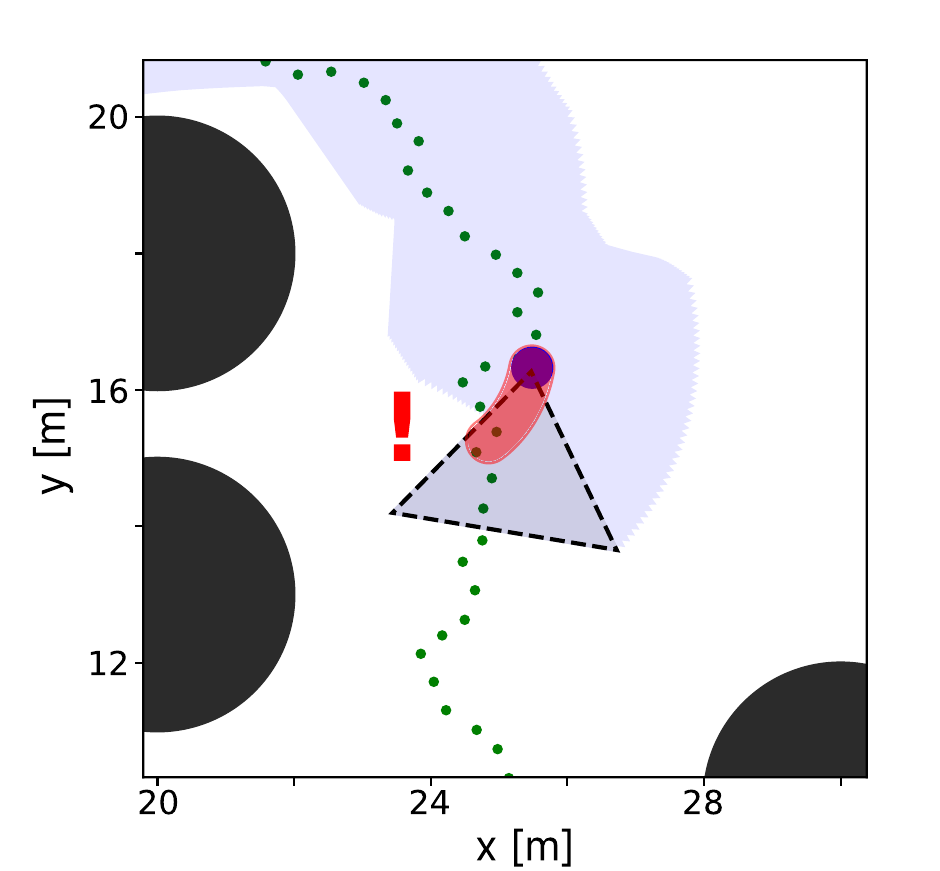}\label{fig:gatekeeper_fail}}
    \subfloat[Gatekeeper - Ours]{\includegraphics[width=0.46\linewidth]{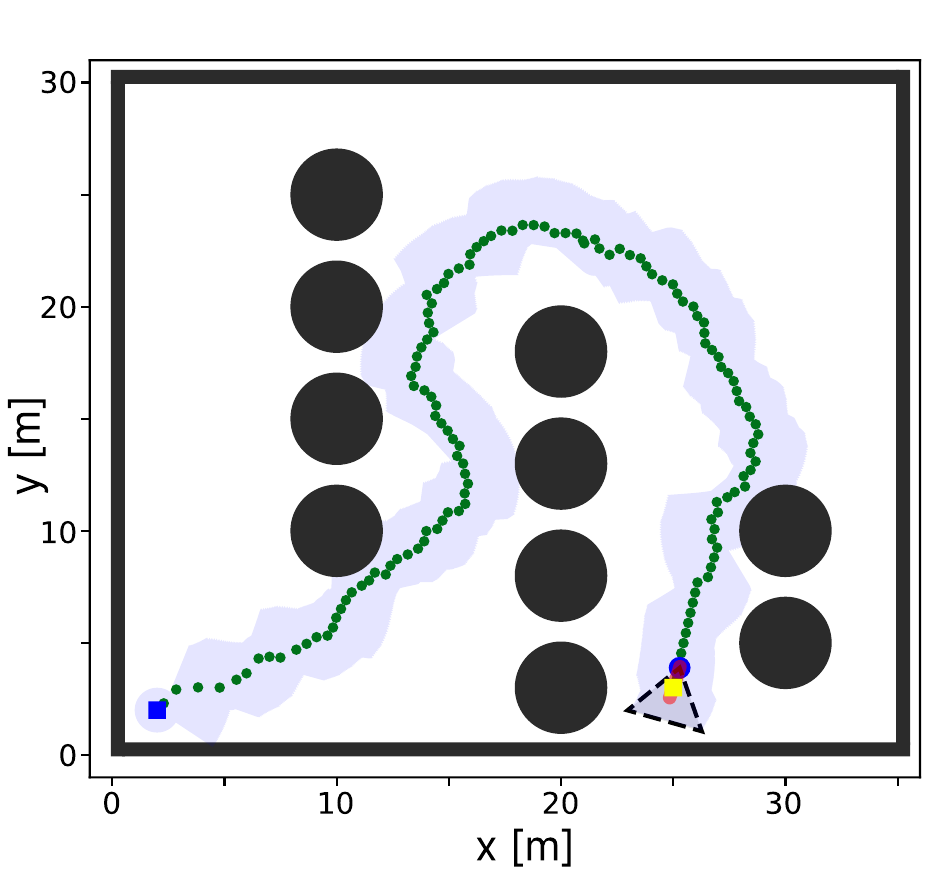}\label{fig:gatekeeper_success}} 
    \caption{Visualization of the Gatekeeper experiments for Env. 2. The reference waypoints correspond to the paths in Fig.~\ref{fig:env2_baseline} and Fig.~\ref{fig:env2_visibility}. Red shaded areas depict the subsequent nominal trajectories at the robot's position. (a) The Gatekeeper executes a stop command as the nominal trajectory is deemed unsafe, where the trajectory lies outside of the local collision-free set~$\calB_t$. Consequently, global replanning is required from the current position. (b) By tracking our reference path, the Gatekeeper successfully navigates to the goal without violating visibility constraint.}
    \label{fig:gatekeeper}
\vspace{-5pt}
\end{figure}

The second strategy treats the unmeasured space as an unsafe set and only controls the robot within the collision-free space sensed by the onboard sensor. The Gatekeeper algorithm~\cite{agrawal_gatekeeper_2024} is a recent advance in this strategy, acting as a safety filter between the planner and the low-level controller. If the nominal trajectory enters the unmeasured space, Gatekeeper executes a backup controller to ensure the robot remains within the known collision-free space. While this strategy prevents collisions with hidden obstacles if a valid backup controller exists, the main challenge arises when the nominal global planner is agnostic to the robot's local sensing capabilities. In such cases, Gatekeeper frequently executes the backup controller and requires global replanning based on the updated environmental information, which can significantly degrade the system's operational efficiency.

Assuming a safety-critical tracking controller~$\pi$ interfacing with the global planner, we can now prove the properties of the resulting path from our proposed planner:
\begin{theorem}
Let $p^\textup{ref}$ be the global reference path generated by the Visibility-Aware RRT* (Alg.~\ref{alg:Visibility-RRT*}). If there exists a local tracking controller~$\pi$ that can track the trajectory between consecutive waypoints with a maximum tracking error~$\epsilon$, then the reference path $p^\textup{ref}$, when tracked by such controller~$\pi$, is both \textit{\textbf{collision-free}} and \textit{\textbf{visibility-aware}}.
\end{theorem}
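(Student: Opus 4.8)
The plan is to prove the two properties separately, since each one is enforced by a distinct CBF constraint that the \texttt{LQR-CBF-Steer} function verifies at every discretized state along each tree edge. For \textbf{collision-freeness}, I would argue as follows. By construction, every edge of the tree returned by Alg.~\ref{alg:Visibility-RRT*} is produced by \texttt{LQR-CBF-Steer}, which appends a new state to the trajectory only if the HOCBF constraint $\psi_{\textup{col}}(\vx_t) \geq 0$ in \eqref{eq:collision-cbf-constraint} holds. Invoking Theorem~\ref{thm:hocbf-forward-invariance} with relative degree $r=2$, satisfaction of $\psi_{\textup{col}} \geq 0$ together with the initial condition $\vx_0 \in \calC_1 \cap \calC_2$ renders $\calC_{\textup{col}} := \calC_1 \cap \calC_2$ forward invariant for the unicycle dynamics \eqref{eq:unicycle-dynamics}. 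Because $h_{\textup{col}}$ in \eqref{eq:collision-cbf} was inflated by $l_{\textup{robot}} + \epsilon$, staying in $\{h_{\textup{col}} \geq 0\}$ means the nominal path keeps a clearance of at least $l_{\textup{robot}} + \epsilon$ from every known obstacle in $\calO_0$; since the local controller $\pi$ tracks $p^\textup{ref}$ with error at most $\epsilon$, the actual closed-loop trajectory avoids all known obstacles, i.e.\ stays in $\calS_0$, which is exactly condition \eqref{eq:first_condition}.

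For \textbf{visibility-awareness}, the argument is parallel but uses the visibility CBF. Along every edge, \texttt{LQR-CBF-Steer} also checks $\psi_{\textup{vis}}(\vx_t) = \dot h_{\textup{vis}}(\vx_t) + \gamma_3 h_{\textup{vis}}(\vx_t) \geq 0$ from \eqref{eq:visibility-cbf-constraint}, and by the preceding lemma $h_{\textup{vis}}$ is a valid (relative-degree-one) CBF; hence Theorem~\ref{thm:cbf-forward-invariance} makes $\calC_{\textup{vis}} := \{h_{\textup{vis}} \geq 0\}$ forward invariant, provided $\vx_0 \in \calC_{\textup{vis}}$. Forward invariance of $\{h_{\textup{vis}} \geq 0\}$ says $t_{\textup{reach}}(x_t,y_t) \geq t_{\textup{rot}}(\theta_t)$ holds for all $t$ along the reference path. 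I would then translate this inequality into the set-membership statement \eqref{eq:second_condition}: since $t_{\textup{reach}} \geq t_{\textup{rot}}$, the robot rotates to bring the critical point $\vx_c$ into its FOV before it can physically reach $\vx_c$, so the segment of the path not yet sensed always lies beyond the current frontier $\partial \calB_t$; combined with the $\epsilon$-tracking bound and the inflation of $t_{\textup{reach}}$ by $l_{\textup{robot}}+\epsilon$ in \eqref{eq:t_reach}, the one-step-ahead state $\vx(t+\Delta t)$ remains inside the already-explored collision-free region $\calB_t$ for a sufficiently small $\Delta t > 0$, which is \eqref{eq:second_condition}.

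Finally I would assemble the two pieces: Problem~\ref{prob:prob} asks precisely for a path whose connecting trajectories satisfy $\vx(\tau) \in \calC_{\textup{col}} \cap \calC_{\textup{vis}}$ and $\vu(\tau) \in \calK_{\textup{col}}(\vx(\tau)) \cap \calK_{\textup{vis}}(\vx(\tau))$, and since \texttt{LQR-CBF-Steer} returns an edge only when both CBF constraints hold at every discretized state, every edge — and hence the extracted path $\calP$ — meets this requirement; the two forward-invariance conclusions above then give collision-freeness and visibility-awareness simultaneously.

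I expect the main obstacle to be the visibility half, specifically making rigorous the jump from the scalar inequality $h_{\textup{vis}} \geq 0$ to the set statement $\vx(t+\Delta t) \in \calB_t$ in \eqref{eq:second_condition}. This requires care on three fronts: (i) the critical point $\vx_c$ and the frontier $\partial\calB_t$ are defined geometrically and, in the implementation, approximated by a linear tube along the LQR trajectory, so one must state the tube assumption explicitly and argue the conclusion holds under it; (ii) $\bar\omega_t$, $t_{\textup{rot}}$, and $t_{\textup{reach}}$ are computed by forward-simulating $\pi_{\textup{lqr}}$, so the continuity/boundedness facts from the CBF-validity lemma (LQR input converging to zero, bounded control space) are needed to ensure $h_{\textup{vis}}$ is genuinely differentiable and its CBF constraint enforceable; and (iii) the discrete-time checking in \texttt{LQR-CBF-Steer} must be reconciled with the continuous-time forward-invariance theorems — I would either invoke a standard sampling-interval argument or simply assume a sufficiently fine discretization, as is conventional in this literature.
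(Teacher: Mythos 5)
Your proposal is correct and follows essentially the same route as the paper's own proof: collision-freeness from the $\epsilon$-inflated collision HOCBF constraint checked at every state in \texttt{LQR-CBF-Steer}, and visibility-awareness from the forward invariance of $\{h_{\textup{vis}} \ge 0\}$ guaranteeing the critical point enters the FOV before it is reached, hence the tracked trajectory stays in $\calB_t$. The paper's argument is in fact less explicit than yours (it does not formally invoke Theorems~\ref{thm:cbf-forward-invariance} and~\ref{thm:hocbf-forward-invariance}), and the three caveats you flag --- the linear-tube approximation of $\partial\calB_t$, the regularity of $\bar\omega_t$, and the discrete-time checking versus continuous-time invariance --- are genuine gaps the paper leaves unaddressed as well.
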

\begin{proof}
By construction, each trajectory~$\bm{\sigma}_{j}$ generated by the \texttt{LQR-CBF-Steer} function satisfies both collision avoidance~\eqref{eq:collision-cbf-constraint} and visibility~\eqref{eq:visibility-cbf-constraint} CBF constraints, considering the robot radius $l_\text{robot}$ and the maximum tracking error $\epsilon$. Since \eqref{eq:collision-cbf-constraint} is satisfied, the trajectories remain within the set $\calC_\textup{col}$. This guarantees that the controller~$\pi$ follows the trajectory~$\bm{\sigma}_{j}$ with a maximum tracking error of $\epsilon$ to remain \textit{\textbf{collision-free}}~\eqref{eq:first_condition}. Moreover, since \eqref{eq:visibility-cbf-constraint} is met, the controller~$\pi$ forces the robot to move along position trajectories $\bm{\sigma}_{j}$ from which the critical point (according to Definition~\ref{def:critical-point}) is visible before it is reached. This guarantees that the tracking controller~$\pi$ can detect and avoid hidden obstacles that, in the worst-case scenario, are placed at the boundary of the local safe set $\calB_t$. Consequently, the controller~$\pi$ can keep the robot within the known local collision-free sets~$\calB_t$ while tracking the reference path $p^\textup{ref}$, satisfying the \textit{\textbf{visibility-aware}} condition~\eqref{eq:second_condition}.
\end{proof}

\section{RESULTS \label{sec:experiments}}

In this section, we evaluate the proposed Visibility-Aware RRT* through simulation results in two distinct environments. We demonstrate that our approach can address the challenges faced by both CBF-QP and Gatekeeper methods, providing a safe and efficient solution for navigating in environments with unknown obstacles, which visibility-agnostic planners fail to achieve.

The first environment (Env. 1) spans an area of 15~m $\times$ 15~m, while the second one (Env. 2) covers a larger area of 35~m $\times$ 30~m. Both environments contain known obstacles that prevent the robot from taking a straight-line path to the goal. The onboard sensor's FOV~$\theta_{\text{fov}}$ and the sensing range~$l_{\text{range}}$ are set to $70^\circ$ and 3~m, respectively, unless otherwise specified.

\begin{table}[t]
\centering
\caption{(left) Collision rates of the CBF-QP controller tracking reference paths generated by different planners in two environments with varying FOV angles. (right) Percentage of reference paths that triggered the execution of the backup controller in the Gatekeeper algorithm, indicating the need for global replanning from scratch. Note that Baseline 1 is a deterministic planner, there is only one reference path tested.}
\label{tab:controller}
\begin{tabular}{lcccc|cc}
\toprule
Controller & \multicolumn{4}{c|}{CBF-QP} & \multicolumn{2}{c}{Gatekeeper}                       \\ 
 \cmidrule(r){2-5}\cmidrule(l){6-7}
Env. & \multicolumn{2}{c}{Env. 1} & \multicolumn{2}{c|}{Env. 2} & Env. 1 & Env. 2   \\
FOV & $45^\circ$  & $70^\circ$ & $45^\circ$ & $70^\circ$ & $70^\circ$ & $70^\circ$ \\
\midrule
Baseline 1              & 100\% & 0\% &  0\% & 0\%  & 100\%  & 100\%  \\
Baseline 2              &   5\% & 3\% & 13\% & 9\%  &  23\%  &  32\%  \\
Baseline 3              &   4\% & 0\% &  1\% & 1\%  &  18\%  &  13\%  \\
\rowcolor{gray!50} Ours &   0\% & 0\% &  0\% & 0\%  &   1\%  &   0\%  \\
\bottomrule
\end{tabular}
\vspace{-6pt}
\end{table}

We compare four planning methods in our experiments: (\romannumeral 1) The A* algorithm with an obstacle distance cost~\cite{kim_open-source_2022} (Baseline 1), (\romannumeral 2) the standard LQR-RRT* algorithm~\cite{perez_lqr-rrt_2012} (Baseline 2), (\romannumeral 3) our proposed method without the visibility constraint (Baseline 3), and (\romannumeral 4) the propsed Visibility-Aware RRT*. To highlight the effect of the visibility constraint, we illustrate the reference paths generated by Baseline 3 and our method in Fig.~\ref{fig:planner}. The results demonstrate that our method maintains fewer nodes in the tree compared to Baseline 3, as it only extends the nodes that satisfy the visibility constraint. This selective expansion leads to a more computationally efficient exploration of the configuration space. Furthermore, the final reference path generated by our method tends to take shallower turns around the known obstacles. This behavior complements the local tracking controller by providing a path that allows it to detect and avoid potential hidden obstacles. %\footnote{Code: \href{https://github.com/tkkim-robot/visibility-rrt}{https://github.com/tkkim-robot/visibility-rrt}} 

To demonstrate the effectiveness of the generated paths when interfacing with the local tracking controllers, we employ two types of controllers to track the paths as discussed in Sec.~\ref{sec:controller}: a CBF-QP controller and the Gatekeeper algorithm. We adopt the dynamic unicycle model for our controllers.

First, we evaluate the safety perspective of the reference path when it interfaces with the CBF-QP controller. We formulate the CBF-QP as in \cite{parwana_feasible_2023}, using the dynamic unicycle model, incorporating a CBF constraint for collision avoidance. In this experiment, we test two different FOV angles: $45^\circ$ and $70^\circ$. For Baseline 1, which is a deterministic planner, we run the algorithm once and for Baselines 2-3 and our proposed method, which are sampling-based planners, we generate 100 reference paths for each setting. After the planning phase, we place unknown obstacles near the known obstacles. Then, we evaluate whether the CBF-QP controller can successfully navigate the robot to the goal without collisions. One example of this experiment is shown in Fig.~\ref{fig:cbf-qp}. The results, summarized in Table~\ref{tab:controller}, demonstrate that even with a safety-critical controller using a valid CBF for collision avoidance, safety cannot be guaranteed if the planner does not consider the robot's perception limitations. This is because the robot detects the hidden obstacle too late, leaving no feasible solution for the CBF-QP to avoid the obstacle. This aspect becomes more prominent as the sensing capability becomes more limited. In contrast, the visibility-aware paths generated by our method can effectively complement the local tracking controller, ensuring safe navigation even in the presence of limited perception.

Next, we investigate the efficiency of the reference paths when interfacing with the Gatekeeper algorithm~\cite{agrawal_gatekeeper_2024}. Gatekeeper assesses the safety of the nominal trajectory and decides whether to follow it or execute a backup controller, which, in this case, is a stop command. If the next nominal trajectory is deemed unsafe, meaning the robot might not be able to completely stop within the known collision-free space given its control authority, Gatekeeper opts not to follow the nominal trajectory. Instead, it executes the backup controller and subsequently plans for global path again. We evaluate the percentage of the same set of reference paths that trigger the execution of the backup controller, indicating that the path was inefficient for navigation. One example of this experiment is shown in Fig.~\ref{fig:gatekeeper}. The results, presented in Table~\ref{tab:controller}, show that Visibility-Aware RRT* significantly reduces the need for replanning compared to the baseline planners. This demonstrates that the proposed method generates reference paths that are more efficient for navigation by minimizing the frequency of replanning required when used with controllers that treat unmeasured space as an unsafe set.

In addition to simulations, we conducted hardware experiments using a ground rover modeled as a dynamic unicycle (see Fig.~\ref{fig:main}). An RGB-D camera was used for localization and obstacle detection with a small safety buffer to mitigate sensor noise. All navigation tasks were operated onboard without a motion capture system. Hidden obstacles were not detected in time while tracking paths that were generated by baseline global path planners, leading to collisions. In Fig.~\ref{fig:main}, the naive path was generated by Baseline~3. In contrast, the paths generated by our method were able to avoid collisions by accounting for perception limitations. The detailed system setup and results are available in the supplementary video. %\footnote{Video: \url{https://youtu.be/l4vlMTf8s74}}

\section{CONCLUSION}

In this paper, we considered that safety may not be guaranteed for robots with limited perception capabilities navigating in unknown environments, even when using safety-critical controllers. To address this challenge, we introduced the Visibility-Aware RRT* algorithm, which generates a global reference path that is both collision-free and visibility-aware, enabling the tracking controller to avoid hidden obstacles. We introduced the visibility CBF to encode the robot's visibility limitations, utilizing its constraint as a termination criterion during the steering process, alongside a standard collision avoidance CBF constraint. We also proved that our algorithm ensures the satisfaction of visibility constraints by the local controller, and we evaluated the performance of the integrated path planner with two different controllers via simulations. Our method outperformed all other compared baselines in both safety and efficiency aspects.

\addtolength{\textheight}{0 cm}   % This command serves to balance the column lengths
                                  % on the last page of the document manually. It shortens
                                  % the textheight of the last page by a suitable amount.
                                  % This command does not take effect until the next page
                                  % so it should come on the page before the last. Make
                                  % sure that you do not shorten the textheight too much.

%%%%%%%%%%%%%%%%%%%%%%%%%%%%%%%%%%%%%%%%%%%%%%%%%%%%%%%%%%%%%%%%%%%%%%%%%%%%%%%%

%%%%%%%%%%%%%%%%%%%%%%%%%%%%%%%%%%%%%%%%%%%%%%%%%%%%%%%%%%%%%%%%%%%%%%%%%%%%%%%%

%%%%%%%%%%%%%%%%%%%%%%%%%%%%%%%%%%%%%%%%%%%%%%%%%%%%%%%%%%%%%%%%%%%%%%%%%%%%%%%%

% \section*{ACKNOWLEDGMENT}
% This work was supported by Agency for Defense Development.

%%%%%%%%%%%%%%%%%%%%%%%%%%%%%%%%%%%%%%%%%%%%%%%%%%%%%%%%%%%%%%%%%%%%%%%%%%%%%%%%
\bibliographystyle{IEEEtran}
\typeout{}
\bibliography{references.bib}

\end{document}